\documentclass[twocolumn]{article}
\usepackage[left=1.5cm,right=1.5cm,top=1.5cm,bottom=1cm,includeheadfoot]{geometry}
\usepackage{authblk}

% Recommended, but optional, packages for figures and better typesetting:
\usepackage{microtype}
\usepackage{graphicx}
\usepackage{subfigure}
\usepackage{booktabs} % for professional tables
\usepackage{adjustbox}
\usepackage{multirow}
%\usepackage[color, notcite]{showkeys}
%\usepackage{widetext}

% hyperref makes hyperlinks in the resulting PDF.
% If your build breaks (sometimes temporarily if a hyperlink spans a page)
% please comment out the following usepackage line and replace
% \usepackage{icml2024} with \usepackage[nohyperref]{icml2024} above.
\usepackage{hyperref}

% Attempt to make hyperref and algorithmic work together better:

\usepackage{natbib}

% For theorems and such
\usepackage{amsmath}
\usepackage{amssymb}
\usepackage{mathtools}
\usepackage{amsthm}
\usepackage{nccmath}

% if you use cleveref..
\usepackage[capitalize,noabbrev]{cleveref}

%%%%%%%%%%%%%%%%%%%%%%%%%%%%%%%%
% THEOREMS
%%%%%%%%%%%%%%%%%%%%%%%%%%%%%%%%
\theoremstyle{plain}
\newtheorem{theorem}{Theorem}[section]

\newtheorem{lemma}[theorem]{Lemma}
\newtheorem{corollary}[theorem]{Corollary}
\theoremstyle{definition}
\newtheorem{definition}[theorem]{Definition}
\newtheorem{assumption}[theorem]{Assumption}
\theoremstyle{remark}

% Todonotes is useful during development; simply uncomment the next line
%    and comment out the line below the next line to turn off comments
%\usepackage[disable,textsize=tiny]{todonotes}
\usepackage[textsize=tiny]{todonotes}

%\twocolumn[
\title{Emergence of heavy tails in homogenized stochastic gradient descent}

%
%
%\icmlsetsymbol{equal}{*}
%
%\begin{icmlauthorlist}
\author[1]{Zhe Jiao}
%\email{zjiao@nwpu.edu.cn}
\author[2]{Martin Keller-Ressel}
%\email{martin.keller-ressel@tu-dresden.de}
%\end{icmlauthorlist}
%
\affil[1]{School of Mathematics and Statistics, Northwestern Polytechnical University, Xi'an, China}
\affil[2]{Department of Mathematics, TU Dresden, Germany}
\affil[2]{ScaDS.ai Center for scalable data analytics and artificial intelligence, Leipzig/Dresden, Germany}
%
%\icmlcorrespondingauthor{Zhe Jiao}{}
%\icmlcorrespondingauthor{Martin Keller-Ressel}{martin.keller-ressel@tu-dresden.de}
%
%
%
%\icmlkeywords{stochastic gradient descent, heavy tails, diffusion approximation, It\^{o} calculus}

\vskip 0.3in
%]

\begin{document}
\maketitle

\begin{abstract}
It has repeatedly been observed that loss minimization by stochastic gradient descent leads to heavy-tailed distributions of neural network parameters.  Here, we analyze a continuous diffusion approximation of SGD, called homogenized stochastic gradient descent, show that it behaves asymptotically heavy-tailed, and give explicit upper and lower bounds on its tail-index. We validate these bounds in numerical experiments and show that they are typically close approximations to the empirical tail-index of SGD iterates. In addition, their explicit form enables us to quantify the interplay between optimization parameters and the tail-index. Doing so, we contribute to the ongoing discussion on links between heavy tails and the generalization performance of neural networks as well as the ability of SGD to avoid suboptimal local minima.
\end{abstract}

\section{Introduction}\label{intro}
Stochastic gradient descent (SGD) is the cornerstone of optimization in modern deep learning (cf. \citet{Bottou2018}). In contrast to deterministic methods, it introduces stochasticity to the optimization procedure and therefore has to be analyzed from a probabilistic viewpoint. For instance, it has been observed by \cite{mm2019, ssg2019, hm2021, GSZ2021} and others, that the distributions of neural network parameters under loss minimization by SGD are typically \textit{heavy-tailed}. This heavy-tailed behavior has been linked to the generalization performance of neural networks: \citet{ssg2019} give evidence that the extreme realizations of heavy-tailed random variables allow SGD to escape local minima of the loss landscape, and \citet{hm2021} argue for a negative correlation between the parameter distributions's tail-index\footnote{The tail-index is a quantitative measure of heavy-tailedness, with a smaller tail-index indicating increased heaviness of tails; see Section~\ref{sec:ht}.} and the network's generalization performance. For these reasons, it is important to understand the origin and effects of heavy-tailed behavior of neural network parameters in SGD. An important step in this direction has been taken in \cite{GSZ2021}, where the tail behavior of SGD iterates is characterized in dependence on optimization parameters, dimension and Hessian curvature at the loss minimum. One limitation of \cite{GSZ2021} is that this link is described only qualitatively, but not quantitatively. Here, we provide an alternative approach through analyzing homogenized stochastic gradient descent, a diffusion approximation of SGD introduced in \cite{paquette2022implicit, MLLU2022}. Leveraging It\^o calculus for diffusion processes, we are able to provide more precise bounds and estimates of the tail behavior of SGD iterates, which we subsequently validate in numerical experiments.
\subsection{Our contribution}
Our contribution to the analysis of heavy-tailed phenomena in SGD can be summarized as follows:
\begin{itemize}
\item We introduce a new method, namely comparison results in \textit{convex stochastic order} for homogenized stochastic gradient descent. These comparison results, derived in Section~\ref{sec:theory} allow us to link SGD to the well-studied class of \textit{Pearson Diffusions} (cf. \citet{Forman2008}) and then to obtain bounds for their tail-index.
\item Contrary to \cite{GSZ2021}, who describe the tail-index only implicitly (observing phase-transitions between different regimes) our tail-index bounds are fully explicit. Moreover, their explicit form is validated in numerical experiments in Section~\ref{sec:experiments}.
\item Our results suggest (skew) t-distributions as surrogate for parameter distributions in neural networks under SGD, in contrast to the earlier work of \citep{GSZ2021} where $\alpha$-stable distributions have been suggested.
\item Finally, our results strongly challenge the claim that the \textit{`observed heavy-tailed behavior of SGD in practice cannot be accurately represented by an SDE driven by a Brownian motion'} put forward in \cite{ssde2020}. Our modeling approach is based on hSGD -- an SDE driven by Brownian motion -- which asymptotically exhibits heavy-tailed behavior with a tail-index that, in experiments, closely matches the empirical tail-index of SGD iterates on real data. 
\end{itemize}

\section{Background}
\label{sec:background}
\subsection{Empirical risk minimization}
The general framework for training deep neural networks is to solve the problem of empirical risk minimization (ERM)
\begin{equation} \tag{ERM}
	\min_{x\in\mathbb{R}^{d}}\left\{f(x):=\frac{1}{n}\sum_{i=1}^{n}f_{i}(x)\right\} \label{erm}
\end{equation}
where $f_i$ denotes the loss induced by the data point $a_i \in \mathbb{R}^d$ with label/response $b_i \in \mathbb{R}$, and $f$ is the \textit{empirical risk} over the training data. For our theoretical and numerical analysis of heavy-tailed phenomena, as in \cite{GSZ2021}, we assume a quadratic structure of $f_i(x)$ with the understanding that a smooth loss landscape can typically be well-approximated by a quadratic function around a local minimum. Thus, we specify the function $f_i$ by setting
\[
	f_{i}(x)= \frac{1}{2}(a_i \cdot x - b_i)^2 + \frac{\delta}{2}\|x\|^2:=  L_i(x) + \frac{\delta}{2}\|x\|^2,
\]
where $L_i(x) $ is the unregularized loss on the $i$-th data point and $\delta \ge 0$ a regularization parameter. This is the same loss function that is used for \emph{ridge regression} (cf. \citet{tibshirani}). We arrange the training data into a design matrix $ A\in \mathbb{R}^{n\times d}$ and label vector $ b\in \mathbb{R}^{n}$, whose $i$-th row are given by $a_i$ and $b_i$ respectively. Thus, we have
\begin{equation*}
	f(x) =\frac{1}{2n}\| Ax- b\|^{2} + \frac{\delta}{2}\|x\|^2 := \frac{1}{n}L(x) + \frac{\delta}{2}\|x\|^2
\end{equation*}
with gradient given by $\nabla f(x) = \frac{1}{n}\nabla L(x) + \delta x$.

\subsection{Stochastic gradient descent}

The standard approach to solve the problem (\ref{erm}) is to use stochastic gradient descent (SGD) or any of its generalizations involving momentum, adaptive learning rates, gradient rescaling, etc. (cf. \citet{Goodfellow2016, Bottou2018}).
As a first step, we consider plain SGD with constant learning rate $\gamma$, which can be written in recursive form as
\begin{equation} \tag{SGD}
	x_{k+1} = x_{k} - \gamma \nabla f_{\Omega_k}(x_k) \label{sgd}
\end{equation}
where $\nabla f_{\Omega_k}(x_k)=\frac{1}{B}\sum_{i\in\Omega_k}f_i(x)$ and $\Omega_k$ is a batch of size $B\geqslant 1$ sampled uniformly and independently from $\{1, \cdots, n\}$. It will be convenient to rewrite (\ref{sgd}) as
\begin{equation}
	x_{k+1} = x_{k} - \gamma \nabla f(x_k) + \gamma\varepsilon(x_k)  \label{sgd_2}
\end{equation}
where the gradient noise is given by
\begin{equation}
	\varepsilon(x_k) = -[\nabla f_{\Omega_k}(x_k) - \nabla f(x_k)]. \label{noise}
\end{equation}
Note that the gradient noise is unbiased (i.e. $\mathbb{E}\varepsilon(x)= 0$) with covariance matrix given by\footnote{Full derivation given in Supplement \ref{sm:converance}.}
\begin{align*}\label{covariance}
&C(x)  := \mathbb{E}\left[\varepsilon(x)^\top \varepsilon(x)\right] \\
&= \frac{1}{B} \left(\frac{1}{n}\sum_{i=1}^{n} \nabla L_i(x)^\top \nabla L_i(x) - \frac{1}{n^2}\nabla L(x)^\top \nabla L(x)\right). \notag
\end{align*}

\subsection{Homogenized Stochastic Gradient Descent}
Homogenized stochastic gradient descent (hSGD), introduced concurrently in \cite{Paquette+22} and \cite{MLLU2022}, is a diffusion approximation of SGD described by a stochastic differential equation (SDE) driven by Brownian motion. 
It is obtained by matching the drift and diffusion coefficient of the SDE to the expectation and to the covariance of the gradient noise (\ref{noise}) and by applying the approximation (cf. \citet{Paquette+22})
\begin{align*}
	C(x) &\overset{\textbf{I}}{\approx} \frac{1}{B}\left(\frac{1}{n}\sum_{i=1}^{n} \nabla L_{i}(x)^{\top}  \nabla L_{i}(x)\right) \\ &=  \frac{1}{B}\left[\frac{1}{n}\sum_{i=1}^{n}(a_i \cdot x - b_i)^2 a_i^{T}a_i\right]\\
	&\overset{\textbf{II}}{\approx} \frac{2}{n^2B}L(x)\nabla^2L(x),
\end{align*}
in which 
\begin{itemize}
\item the approximation \textbf{I} is true due to the fact that the gradient noise variance dominates the gradient mean near minima which is based on \cite{Smith2018};
\item the approximation \textbf{II} comes from the decoupling approximation (cf. \citet{MLLU2022}).
\end{itemize}
Note that hSGD differs from the well-known Ornstein-Uhlenbeck approximation of \cite{Mandt2016, Jastrzebski2017}, which uses a deterministic approximation of the diffusion coefficient, whereas the diffusion coefficient of hSGD is stochastic. \citet{Paquette+22} show both analytically and in experiments that hSGD approximates the dynamics of SGD with high accuracy, in particular in large dimension. In our notation, hSGD for empirical risk minimization is given by 
\begin{equation} \tag{hSGD}
	dX_t = -\gamma\nabla f(X_t)dt + \gamma\sqrt{\frac{2}{n^2 B}L(X_t)\nabla^2L(X_t)}dW_t, \label{hsgd}
\end{equation}
where $(W_t)_{t \ge 0}$ is $d$-dimensional standard Brownian motion\footnote{We remark that \cite{Paquette+22} assume a batch size of $B=1$; the derivation of \cite{MLLU2022}, however, does not restrict $B$.}.

Following \cite{Paquette+22}, the stochastic differential equation \eqref{hsgd} can be simplified by using the singular value decomposition of the design matrix $A$. In detail, let $A=P\Sigma Q^{\mathrm{T}} $ be the singular value decomposition of $A$, where $Q$ is $d$-by-$d$ and satisfies $Q^{\mathrm{T}}  Q = I$, $P$ is $n$-by-$d$ and satisfies $P^T P= I$ and 
\[
	\Sigma = \textrm{diag}\{\lambda_j\}, \quad \lambda_1 \geqslant \lambda_2 \geqslant \cdots \geqslant \lambda_d  \ge 0.
\]
At this point we impose the following mild assumption:
\begin{assumption}\label{ass1}
All Eigenvalues of $A$ are strictly positive and $b$ is not in the column space of $A$.
\end{assumption}
It is easily verified that under Assumption~\ref{ass1} $x^\ast = (A^{\mathrm{T}}  A)^{-1}A^{\mathrm{T}}  b$ is the global minimum of the unregularized loss function $L$. We set 
\begin{align*}
\alpha &=  (\alpha_i)_{i=1}^d = \left[\left(1-\frac{1}{n}\right) \Sigma^{\mathrm{T}}  \Sigma  - \delta I_d \right] Q^{\mathrm{T}}  x^\ast\\
\beta &=  b^{\mathrm{T}} (PP^{\mathrm{T}} - I) b > 0\\
Y_t &= (Y_t^i)_{i=1}^d = Q^{\mathrm{T}}  X_t - Q^{\mathrm{T}}  x^\ast
\end{align*}
where the strict positivity of $\beta$ follows from Assumption~\ref{ass1}, and obtain the system of SDEs
\begin{multline}\label{sde1}
	dY^{i}_t = -\gamma \left[ \left(\frac{ \lambda^2_i}{n} + \delta\right)Y^{i}_t - \alpha_i \right]\,dt \\ + \gamma \lambda_i \sqrt{\frac{1}{B n^2} \left[\sum_{j=1}^{d}(\lambda_j Y^{j}_t)^2 + \beta\right]} dB^{i}_t 
\end{multline} 
for the `centered principal components' $(Y_t^1, \dotsc, Y_t^d)$ of \eqref{hsgd}, on which our analysis will be based.  Note that the processes $Y_t^i$ are only coupled through the summation term in their diffusion coefficients. 

\subsection{Heavy-Tailed Distributions}\label{sec:ht}
We start by collecting some definitions related to heavy-tailed distributions and their tail-index (cf. \citet{resnick2007}).

\begin{definition}
A distribution function $F(z)$ is said to be \emph{heavy-tailed} (at the right end) if and only if
\[
	\limsup_{t\rightarrow\infty}\frac{1- F(x)}{e^{-sz}} = \infty, \quad \textrm{for all $s>0$}.
\]
A real-valued random variable is said to be heavy-tailed if its distribution function is heavy-tailed.
\end{definition}

\begin{definition} \label{def_2}
An $\mathbb{R}^d-$valued random vector $X$ is heavy-tailed if $u^{\mathrm{T}} X$ is heavy-tailed for some vector $u\in\mathbb{S}^{d-1}:=\{u\in\mathbb{R}^{d}: \|u\| = 1\}$.
\end{definition}

\begin{definition}
The \emph{tail-index} of an $\mathbb{R}^d-$valued random vector $X$ is defined as
\begin{equation*}
\eta := \sup \{p \ge 0: \mathbb{E}[|X|^p] < \infty\} \in [0,\infty]
\end{equation*}
\end{definition} 
In particular, a finite tail-index $\eta < \infty$ implies heavy-tailedness of $X$, and lower values of $\eta$ signify increased heaviness of tails and more extremal behavior. A tail-index of $\eta < 2$, for example, implies infinite variance and $\eta < 1$ implies non-existence of even the mean of $X$. Examples of heavy-tailed distributions are the lognormal distribution, the $t$-distribution, the Pareto (power-law) distribution, and $\alpha$-stable distributions.

Finally, we introduce a definition related to the asymptotic behavior of stochastic processes.
\begin{definition}
Let $X = (X_t)_{t \geq 0}$ be a stochastic process. 
The \emph{asymptotic tail-index} of $X$ is defined as
\begin{equation}\label{eq:tail_index}
\eta := \sup \{p \ge 0: \limsup_{t \to \infty} \mathbb{E}[|X_t|^p] < \infty\}.
\end{equation}
\end{definition}

\subsection{Pearson Diffusions}
We perform a convenient rescaling of \eqref{sde1} by setting, for $i \in \{1, \dotsc, d\}$,
\begin{align} \label{substitute}
	Z^{i}_t  &= \frac{\lambda_i}{\sqrt{\beta}}Y^{i}_t, \quad \theta_i = \gamma \left( \frac{\lambda^2_i}{n} + \delta \right) > 0, \\
	\mu_i &=  \frac{n\lambda_i\alpha_i}{\sqrt{\beta}(\lambda^2_i +n\delta)}, \quad a_i = \frac{\gamma\lambda_i^4}{2nB(\lambda_i^2 + n\delta)}, \notag
\end{align}
which recasts the system \eqref{sde1} to
\begin{equation}
	dZ^{i}_t = -\theta_i(Z^{i}_t -\mu_i)dt + \sqrt{2\theta_i a_i (|Z_t|^2 + 1)}dB^{i}_t.  \label{eq:sde_Z}
\end{equation}
These SDEs now have a clear structural resemblance to the system of independent one-dimensional SDEs
\begin{equation}
	d\hat Z^i_t = -\theta(\hat Z^i_t -\mu_i)dt + \sqrt{2\theta_i a_i ((\hat Z^i_t)^2 + 1)}dB^i_t,  \label{PearsonIV}
\end{equation}
with the only difference given by the coupling of \eqref{eq:sde_Z} through the $|Z_t|^2$-term in the diffusion coefficient. The components of \eqref{PearsonIV} are independent \textit{Pearson diffusions}. Pearson diffusions are a flexible class of SDEs with a unified theory for statistical inference and with stationary distributions known as Pearson distributions (cf. \citet{Forman2008}). The stationary distribution of $\hat Z^i_t$ described by \eqref{PearsonIV} is called Pearson's type IV distribution (or skew $t$-distribution) and has the un-normalized density
\begin{align}
	p_i(u) \propto & \left[1+ \left(\tfrac{u}{\sqrt{\nu_i}} + \mu_i\right)^2\right]^{-\frac{\nu_i + 1}{2}} \cdot \notag \\ &\exp\left\{\mu_i(\nu_i -1)\arctan\left(\tfrac{u}{\sqrt{\nu_i}} + \mu_i\right)\right\}\label{eq:stationary_law}
\end{align}
with $\nu_i = a_i^{-1} + 1$. If $\mu_i = 0$, Pearson's type IV distribution will be a scaled $t$-distribution. Figure \ref{fig_1} (g) demonstrates the change in the complementary cumulative distribution functions of the $t$-distribution as $\nu_i$ increases. It is easily seen that the Pearson type IV distribution is heavy-tailed with tail-index given by $\nu_i$, thus providing a first connection between the SDE-approach and the emergence of heavy-tails. We also emphasize the contrast to \cite{GSZ2021}, where the different class of $\alpha$-stable distributions is used to describe the asymptotic behavior of SGD iterates.

%\begin{figure}[htbp]
%\begin{center}
%\centerline{\includegraphics[width=15em]{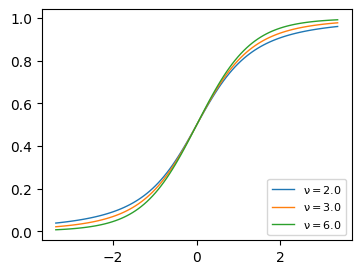}}
%\caption{Empirical cumulative distributions function by using random numbers generated by $t$-distribution with different tail indices.}
%\label{fig_t}
%\end{center}
%\end{figure}

\section{Theoretical results}
\label{sec:theory}
\subsection{Comparison to Pearson Diffusion}

\begin{theorem} \label{thm:main1}
For $i = 1, \cdots, d$, let $(Z^{i}_t)_{t\geqslant 0}$ be the components of the rescaled \eqref{hsgd} from \eqref{eq:sde_Z} and $(\hat{Z}^{i}_t)_{t\geqslant 0}$ be the independent Pearon diffusion from \eqref{PearsonIV}. Then for any $t\geqslant0$ and convex function $g: \mathbb{R}\rightarrow\mathbb{R}$ it holds that
\begin{equation}\label{eq:conv_order}
\mathbb{E}[g(Z^i_t)] \ge \mathbb{E}[g(\hat Z^i_t)].
\end{equation}
In particular this implies the ordering of $p$-moments
\begin{equation}\label{eq:p_order}
\mathbb{E}[|Z^i_t|^p] \ge \mathbb{E}[|\hat Z^i_t|^p]
\end{equation}
for all $p \ge 1$. 
\end{theorem}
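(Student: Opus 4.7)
The plan is to prove the convex ordering \eqref{eq:conv_order} via a backward-Kolmogorov / propagation-of-convexity argument, and then to deduce \eqref{eq:p_order} by specializing to $g(z) = |z|^p$. Fix a horizon $T > 0$, assume matching deterministic initial conditions $Z^i_0 = \hat Z^i_0$, and first take $g \in C^2(\mathbb{R})$ convex with polynomial growth; the general convex case will follow by mollification. Introduce the value function of the Pearson diffusion,
\[
u(t, z) := \mathbb{E}\bigl[g(\hat Z^i_T) \,\big|\, \hat Z^i_t = z\bigr], \qquad 0 \le t \le T,
\]
which solves the backward Kolmogorov equation
\[
\partial_t u + \theta_i(\mu_i - z)\,\partial_z u + \theta_i a_i(z^2 + 1)\,\partial_z^2 u = 0, \quad u(T,\cdot) = g.
\]

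The crucial step is \emph{propagation of convexity}: $\partial_z^2 u(t, \cdot) \ge 0$ for every $t \le T$. Differentiating the backward PDE twice in $z$, I obtain a linear parabolic equation for $v := \partial_z^2 u$,
\[
\partial_t v + \tilde b(z)\,\partial_z v + \theta_i a_i(z^2 + 1)\,\partial_z^2 v + 2\theta_i(a_i - 1)\, v = 0,
\]
with $\tilde b(z) = \theta_i(\mu_i - z) + 4\theta_i a_i z$ and terminal datum $v(T, \cdot) = g'' \ge 0$. A Feynman--Kac representation (equivalently, the parabolic maximum principle) then yields $v \ge 0$ on $[0, T] \times \mathbb{R}$. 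This is exactly where the structural features of the Pearson IV SDE enter: the linearity of the drift and the convexity of $\sigma^2(z) = 2\theta_i a_i(z^2 + 1)$ ensure that the zero-order term in the $v$-equation is constant and that no sign-indefinite first-order term appears.

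Armed with $\partial_z^2 u \ge 0$, I would apply It\^o's formula to $u(t, Z^i_t)$ along the coupled dynamics \eqref{eq:sde_Z}. Since $Z^i$ shares its drift with $\hat Z^i$ but has the \emph{enlarged} diffusion coefficient $2\theta_i a_i(|Z_t|^2 + 1)$ in place of $2\theta_i a_i((Z^i_t)^2 + 1)$, combining the It\^o expansion with the backward PDE gives
\[
d u(t, Z^i_t) \;=\; \theta_i a_i \Bigl(\textstyle\sum_{j \ne i} (Z^j_t)^2\Bigr)\,\partial_z^2 u(t, Z^i_t)\, dt \;+\; dM_t,
\]
where $M$ is a local martingale and every factor of the drift term is nonnegative. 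Integrating over $[0, T]$ and taking expectations -- after a localization promoting $M$ to a true martingale -- yields
\[
\mathbb{E}[g(Z^i_T)] - \mathbb{E}[g(\hat Z^i_T)] \;=\; \mathbb{E}\!\int_0^T \theta_i a_i \textstyle\sum_{j \ne i}(Z^j_t)^2\,\partial_z^2 u(t, Z^i_t)\, dt \;\ge\; 0,
\]
which is \eqref{eq:conv_order}. Mollification extends the bound to all convex $g$, and specializing to the convex map $g(z) = |z|^p$ with $p \ge 1$ immediately yields \eqref{eq:p_order}.

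The hardest step will be the propagation of convexity. Although the overall recipe is classical (Hajek; Ekstr\"om--Tysk; Janson--Tysk), the Pearson diffusion has a quadratically growing $\sigma^2$, so one must justify the required $C^{1,4}$-regularity of $u$ and the twofold differentiation of the backward PDE; this will likely require interior Schauder estimates for the Pearson-type parabolic operator, or a limiting argument through smooth compactly-supported approximations of $g''$. A secondary technical issue is confirming that $M$ is a genuine martingale rather than merely a local one, which should follow from localization combined with a priori moment bounds $\sup_{t \le T} \mathbb{E}|Z_t|^{2q} < \infty$ obtained by Gr\"onwall's inequality applied to $\mathbb{E}|Z_t|^{2q}$ for $q$ small enough to control the quadratic term in the diffusion.
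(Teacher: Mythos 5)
Your argument is sound and reaches the same conclusion, but the key lemma---propagation of convexity for the backward functional $u(t,\cdot)=\mathcal{G}_i(t,\cdot)$---is proved by a genuinely different route. You differentiate the backward Kolmogorov PDE twice in $z$ to obtain a linear parabolic equation for $v=\partial_z^2 u$ (and your computation of its coefficients $\tilde b(z)=\theta_i(\mu_i-z)+4\theta_i a_i z$ and zero-order term $2\theta_i(a_i-1)$ is correct: the absence of a sign-indefinite source term $b''\,\partial_z u$ is precisely because the drift $b_i(z)=-\theta_i(z-\mu_i)$ is affine, so $b''\equiv 0$, while $(\sigma^2)''=4\theta_i a_i$ is a constant because $\sigma^2$ is quadratic), and then invoke Feynman--Kac/maximum-principle to conclude $v\ge 0$. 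The paper instead proves the same convexity of $\mathcal{G}_i(t,\cdot)$ (Lemma 3.2) probabilistically: it passes to the Euler--Maruyama scheme, shows that each one-step transition operator $\mathcal{A}$ preserves convex order via Strassen's theorem, conditional Jensen, and a Gaussian-scale-mixture comparison (convexity of $z\mapsto\sqrt{z^2+1}$ is what is used there, rather than convexity of $\sigma^2$), and then passes to the continuous-time limit. Your PDE route is more direct and makes the structural hypotheses on $b$ and $\sigma^2$ explicit, but---as you rightly flag---it requires $C^{1,4}$ regularity of $u$ to justify differentiating the PDE twice, and hence Schauder-type estimates or a mollification of $g''$; the discretization route avoids this extra regularity entirely because it never touches $\partial_z^2 u$ as a classical object, at the cost of the Strassen/Jensen bookkeeping. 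The It\^o step, the identification of the nonnegative drift $\theta_i a_i\sum_{j\ne i}(Z^j_t)^2\,\partial_z^2 u$, and the final mollification to general convex $g$ coincide with the paper. You also diverge on how the local martingale $M$ is promoted: you propose Gr\"onwall-type a priori moment bounds on $Z$, whereas the paper uses the polynomial-process structure of $\hat Z$ (Lemma 3.3) to show $\mathcal{G}_i(t,Z^i_t)$ is of class $\mathrm{(D)}$; both give uniform integrability of the localized sequence, so either works.

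Two minor points of precision: convexity of $\sigma^2$ does not by itself make the zero-order coefficient of the $v$-equation constant---that constancy comes from $\sigma^2$ being an exact quadratic so that $(\sigma^2)''$ is constant---but the sign-preservation argument only needs the zero-order term to be bounded below, so this imprecision is harmless. And the Feynman--Kac representation for $v$ uses an auxiliary diffusion with drift $\tilde b$ and the same quadratically growing $\sigma^2$; you should note that this auxiliary SDE still has locally Lipschitz, linearly growing coefficients (square-root of a quadratic), so well-posedness is not an issue.
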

The ordering of $Z^i_t$ and $\hat Z^i_t$ given by \eqref{eq:conv_order} is also known as \emph{convex stochastic order}; see \cite{shaked2007}. Note that finiteness of the expectations does not need to be assumed, i.e., the inequalities also hold if one of the expectations takes the value $+\infty$. Comparison results in stochastic order for SDEs have been shown for example in \cite{bergenthum2007}. However, since none of the results can be applied directly in our setting, we will give a self-contained proof.

Comparison results for SDEs generally require two conditions (cf. \citet{bergenthum2007}): An ordering between the drift- and diffusion-coefficients of the two SDEs, and the `propagation-of-order'-property for one of the processes. In our case, the SDEs \eqref{eq:sde_Z} and \eqref{PearsonIV} can be represented -- component by component -- in the form
\begin{align*}
%\begin{aligned}
	dZ^i_t &= b_i(Z_t)dt + \sigma_i(Z_t) dB^i_t,\\
	d\hat{Z}^i_t& = b_i(\hat{Z}_t)dt + \hat{\sigma}_i(\hat{Z}^i_t)dB^i_t,
%\end{aligned}
\end{align*}
where 
\begin{align*}
b_i(z) = -\theta_i (z_i - \mu_i),\\
\sigma_i^2(z) = 2 \theta_i a_i (|z|^2 + 1) \quad \text{and} \quad 
\hat \sigma_i(z_i)^2 &=  2 \theta_i a_i (z_i^2 + 1).
\end{align*}
While the drift coefficients are identical, the diffusion coefficients satisfy the inequality $\sigma_i(z) \ge \hat \sigma_i(z)$ for all $z \in \mathbb{R}^{d}$ and $i =1, \dotsc, d$. Also note that all coefficients are Lipschitz continuous and of bounded growth, such that the standard assumptions for uniqueness and existence of strong SDE solutions are satisfied.
Moreover, the SDEs for $\hat Z_t^i$ are decoupled and each is a Markov diffusion with generator given by
\begin{eqnarray*}
\begin{aligned}
	\hat{\mathcal{L}}_i = b_i(x) \partial_x + \frac{\sigma_i(x)^2}{2} \partial_{x x},
\end{aligned}
\end{eqnarray*}
where $x$ denotes the scalar state variable of $\hat Z^i$. Let $C_{P}^{l}(\mathbb{R})$ denote the subspace of $C^{l}$-functions for which all derivatives up to order $l$ have polynomial growth. Suppose that $g\in C_{P}^{l}(\mathbb{R})$. From Theorem 4.8.6 in \cite{KP} the backward functional 
\[
	\mathcal{G}_i(t, x) = \mathbb{E}[g(\hat{Z}^i_T)|\hat{Z}^i_t = x], \quad t\in[0, T],
\]
satisfies the backward Kolmogorov equation
\begin{align}\label{eq:bw_eq}
	\partial_t \mathcal{G}_i(t, x) + \hat{\mathcal{L}}_i\mathcal{G}_i(t, x) &= 0 \quad t < T, \\
	\mathcal{G}_i(T, x) &= g(x). \notag
\end{align}
with $\partial_t \mathcal{G}_i$ continuous and $\mathcal{G}_i(t, \cdot)\in C_{P}^{l}(\mathbb{R})$ for each $t\in[0, T]$.\\
The following Lemma on the \emph{propagation-of-order} property of $\hat Z$ can be shown by Euler-Maruyama approximation; see Supplement \ref{sm:lemma_proof} for the complete proof.
\begin{lemma}\label{lem:POg}
If $g\in C_{P}^{l}(\mathbb{R})$ is convex, so is $\mathcal{G}_i(t, \cdot)$ for all $t\in[0, T]$ and $i =1, \dotsc, d$.
\end{lemma}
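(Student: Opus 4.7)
The plan is to establish convexity of $\mathcal{G}_i(t,\cdot)$ by passing through the Euler-Maruyama discretization of the one-dimensional Pearson SDE \eqref{PearsonIV}. By time-homogeneity, $\mathcal{G}_i(t,x) = \mathbb{E}[g(\hat Z^{i,x}_{T-t})]$, where $\hat Z^{i,x}$ denotes the solution of \eqref{PearsonIV} starting at $x$. For $N \in \mathbb{N}$ I set $\Delta = (T-t)/N$ and define the Euler iterates $Z^\Delta_0 = x$, $Z^\Delta_{k+1} = h_{\xi_{k+1}}(Z^\Delta_k)$ with $\xi_1,\dotsc,\xi_N$ i.i.d.\ standard normal and
\[
  h_\xi(z) := z - \theta_i(z-\mu_i)\Delta + \sqrt{2\theta_i a_i \Delta (z^2+1)}\,\xi.
\]
Put $V^\Delta_N(x) := \mathbb{E}[g(Z^\Delta_N)\mid Z^\Delta_0 = x]$. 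By the Markov property of the scheme, $V^\Delta_N = T V^\Delta_{N-1}$ with $Tv(x) := \mathbb{E}[v(h_\xi(x))]$; the strategy is to show that $T$ preserves convexity within $C_P^l(\mathbb{R})$ and then let $N\to\infty$.

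The heart of the proof is the one-step propagation. Note that $h_\xi$ is convex in $x$ for $\xi\ge 0$ but concave for $\xi \le 0$, so $v\circ h_\xi$ is not convex in general for fixed $\xi$. To circumvent this I would exploit the symmetry of the Gaussian density $\phi$: writing $u(x) := (1-\theta_i\Delta)x + \theta_i\mu_i\Delta$ (affine) and $w_\xi(x) := \sqrt{2\theta_i a_i\Delta(x^2+1)}\,\xi$, one has
\[
  Tv(x) = 2\int_0^\infty F_\xi(x)\,\phi(\xi)\,d\xi, \quad F_\xi(x) := \tfrac{1}{2}\bigl[v(u+w_\xi) + v(u-w_\xi)\bigr].
\]
A direct second-derivative computation, using $u'' = 0$, decomposes $F_\xi''(x)$ as $\tfrac{1}{2} w_\xi''[v'(u+w_\xi) - v'(u-w_\xi)] + \tfrac{1}{2}(u'+w_\xi')^2 v''(u+w_\xi) + \tfrac{1}{2}(u'-w_\xi')^2 v''(u-w_\xi)$. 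Each summand is nonnegative for $\xi\ge 0$: the first because $w_\xi'' > 0$ together with $w_\xi \ge 0$ and $v'$ non-decreasing (by convexity of $v$), the other two because $v'' \ge 0$. Differentiating under the $\xi$-integral is justified by the polynomial growth of $v$ and its derivatives combined with the Gaussian tails, and standard moment bounds for $h_\xi$ ensure $Tv \in C_P^l(\mathbb{R})$ as well.

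Starting from $V^\Delta_0 = g$ convex, induction yields convexity and $C_P^l$-regularity of $V^\Delta_N$ for every $N$. Classical Euler-Maruyama convergence applies because $b_i$ and $\hat\sigma_i$ are globally Lipschitz and of linear growth, so $Z^{\Delta,x}_N \to \hat Z^{i,x}_{T-t}$ in $L^p$ as $N\to\infty$ with $N\Delta = T-t$; the polynomial growth of $g$ then upgrades this to pointwise convergence $V^\Delta_N(x) \to \mathcal{G}_i(t,x)$. Pointwise limits of convex functions are convex, and so $\mathcal{G}_i(t,\cdot)$ is convex, as claimed.

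The main obstacle is precisely this one-step propagation, since the indefinite sign of the Itô increment prevents $h_\xi$ from being convex almost surely. The symmetrization trick effectively replaces $\xi$ by $|\xi|$ and recasts the problematic random map into the symmetric form $F_\xi$, which closes the argument thanks to three features specific to the Pearson setting: the drift $b_i$ is affine (so $u'' = 0$), the diffusion coefficient $\hat\sigma_i^2(z) = 2\theta_i a_i(z^2+1)$ makes $w_\xi$ convex and nonnegative in $x$ for $\xi \ge 0$, and the Gaussian increments of the Euler scheme are symmetric about zero.
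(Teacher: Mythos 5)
Your proof is correct and follows the same overall strategy as the paper (Euler--Maruyama discretization, one-step propagation of convexity, passage to the limit), but the key one-step argument is genuinely different. The paper proves the stronger statement that the one-step transition operator preserves \emph{convex stochastic order}: starting from $S_1 \le_{cx} S_2$, it invokes Strassen's theorem to build a martingale coupling $\mathbb{E}(S_2\mid S_1)=S_1$, applies conditional Jensen using the affinity of $b_i$ and the convexity of $\hat\sigma_i$, and then closes the argument by citing a comparison result for Gaussian laws with ordered variances (Theorem~3.4.7 of M\"uller--Stoyan); convexity of $\mathcal{G}_K(t,\cdot)$ is then extracted by specializing $S_1$ to a point mass and $S_2$ to a two-point law. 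You instead attack convexity of $Tv$ directly: symmetrizing the Gaussian increment to replace $\xi$ by $|\xi|$, you write $Tv$ as a nonnegative mixture of the symmetric averages $F_\xi = \tfrac12[v(u+w_\xi)+v(u-w_\xi)]$ and verify $F_\xi''\ge 0$ term by term, which is essentially an unpacking of the M\"uller--Stoyan Gaussian comparison into an explicit computation. Both routes rest on exactly the same three structural facts -- affine drift, convex and nonnegative $\hat\sigma_i$, symmetric increments -- and you identify these explicitly at the end. Your version is more elementary and self-contained (no Strassen, no external convex-order reference) at the cost of needing $C^2$ regularity of $v$ for the pointwise second-derivative computation, which you correctly note is restored after one smoothing step of $T$; the paper's version is derivative-free and establishes the stronger propagation-of-order statement, which is closer in spirit to the comparison literature it cites.
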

Finally, we need a technical result that shows that each process $\mathcal{G}_i(z,\hat Z^i_t)_{t \in [0,T]}$ is of `class $\mathrm{(D)}$' a proof is given in the Supplement \ref{sm:lemma_proof}.\footnote{A stochastic process $(X_t)_{t \in I}$ is of class (D), if the set $\{X_{\tau}: \text{$\tau$ is $I$-valued stopping time}\}$ is uniformly integrable (cf. Definition 4.8 in \cite{karatzas2012}).}
\begin{lemma}\label{lem:class_DL}
For each $i=1, \dotsc, d$, the process  $\mathcal{G}_i(t, Z_t^i)_{t \in [0,T]}$ is of class $\mathrm{(D)}$.
\end{lemma}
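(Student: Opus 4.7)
The plan is to prove class (D) by finding a single integrable random variable $M$ that pathwise dominates the entire family $\{|\mathcal{G}_i(\tau, Z_\tau^i)| : \tau \text{ an } [0,T]\text{-valued stopping time}\}$. Since domination by an integrable random variable implies uniform integrability, this reduces the lemma to two quantitative ingredients: a pointwise polynomial bound on $\mathcal{G}_i(t,x)$ that is uniform in $t \in [0,T]$, and a moment bound for $\sup_{t \le T}|Z_t|$.

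For the first ingredient, I would extract from the cited Theorem 4.8.6 in Kloeden--Platen the stronger statement that $\mathcal{G}_i(t,\cdot) \in C_P^l(\mathbb{R})$ \emph{uniformly} on $[0,T]$, i.e.\ there exist constants $C>0$ and an integer $k \ge 1$ (depending on the polynomial growth order of $g$) such that
\[
	|\mathcal{G}_i(t,x)| \le C\bigl(1 + |x|^k\bigr), \qquad (t,x) \in [0,T] \times \mathbb{R}.
\]
This is a standard consequence of the Feynman--Kac representation of $\mathcal{G}_i$ together with moment estimates for $\hat Z^i$: one bounds $|\mathcal{G}_i(t,x)| \le \mathbb{E}[|g(\hat Z_T^i)|\,|\,\hat Z_t^i = x]$, uses $|g(y)| \le C'(1+|y|^k)$, and applies Grönwall to the standard moment identity for the Pearson SDE \eqref{PearsonIV} to transport the growth bound from time $T$ back to time $t$ with constants that can be chosen uniformly over $t \in [0,T]$.

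For the second ingredient, I would apply the standard $L^p$-moment estimate for strong solutions of SDEs with Lipschitz and linearly growing coefficients (e.g.\ Theorem~2.9 in Karatzas--Shreve) to the rescaled system \eqref{eq:sde_Z}. The drift $-\theta_i(z_i - \mu_i)$ and the diffusion $\sqrt{2\theta_i a_i(|z|^2+1)}$ are both Lipschitz in $z \in \mathbb{R}^d$ and of linear growth; the coupling through $|Z_t|^2$ forces the estimate to be done for the full vector, which is handled by applying It\^o's formula to $|Z_t|^{2p}$, taking expectations, and invoking Grönwall. This yields $\mathbb{E}[\sup_{t \in [0,T]}|Z_t|^p] < \infty$ for every $p \ge 1$, and in particular for $p=k$.

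Putting the pieces together, the random variable
\[
	M := C\Bigl(1 + \sup_{t \in [0,T]}|Z_t^i|^k\Bigr)
\]
is integrable and pathwise dominates $|\mathcal{G}_i(t, Z_t^i)|$ for every $t \in [0,T]$, hence also $|\mathcal{G}_i(\tau, Z_\tau^i)|$ for every stopping time $\tau$ taking values in $[0,T]$. Uniform integrability of the stopped family, i.e.\ class $\mathrm{(D)}$, follows immediately. The main technical obstacle is the first ingredient: verifying that the polynomial growth bound on $\mathcal{G}_i(t,\cdot)$ is truly uniform in $t \in [0,T]$ rather than blowing up as $t \to 0$. Once this is secured via the Feynman--Kac route sketched above, the rest is a routine combination of SDE moment bounds and domination.
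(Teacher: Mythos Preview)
Your proposal is correct and follows the same high-level skeleton as the paper: establish a polynomial growth bound $|\mathcal{G}_i(t,x)| \le C(1+|x|^k)$ uniform in $t\in[0,T]$, combine it with a finite-moment estimate for $\sup_{t\le T}|Z_t^i|$, and deduce uniform integrability. The implementations differ in two places. For the growth bound, the paper exploits that the Pearson diffusion \eqref{PearsonIV} is a \emph{polynomial process} and invokes the matrix-exponential moment formula of Filipovi\'c--Larsson to write $\mathcal{G}_i(t,x)=\exp\{(T-t)G\}\mathrm{P}(x)$ for a fixed matrix $G$ and the vector $\mathrm{P}(x)=(1,x,\dots,x^p)^{\mathrm T}$, which immediately yields the bound; your Feynman--Kac plus Gr\"onwall route is more elementary, does not require the polynomial-process machinery, and has the advantage of applying directly to all $g\in C_P^l(\mathbb{R})$ rather than only to polynomial $g$. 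For the conclusion, the paper shows $L^2$-boundedness of the stopped family (via Lemma~2.17 of Cuchiero--Keller-Ressel--Teichmann for the moment bound on $Z^i$), whereas you dominate pathwise by the single integrable random variable $M=C(1+\sup_{t\le T}|Z_t^i|^k)$; both yield class~$\mathrm{(D)}$, and your domination argument is arguably the cleaner of the two.
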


We are now prepared to give the proof of our first main result.
\begin{proof}[Proof of Theorem~\ref{thm:main1}]
Let $g$ be a convex function and assume for now that $g \in C_P^2(\mathbb{R})$. Define the local martingale 
\[L_t = \int_{0}^{t}\partial_{x}\mathcal{G}_i(s, Z^i_s)\sigma_i(Z_s)dB^i_s\]
Using It\^o's formula in the first step and \eqref{eq:bw_eq} in the second step, we have

\begin{align} \label{estimate}
%\medmath{
\mathcal{G}_i&(t, Z^i_t)   - \mathcal{G}_i(0, Z^i_0)\\ 
=  \notag & \medint\int_{0}^{t}\partial_{t}\mathcal{G}_i(s, Z^i_s)ds + \phantom{x}\\
\notag & \medint \int_{0}^{t}\left(b_i(Z_t^i) \partial_x + \tfrac{\sigma_i^2(Z_t)}{2} \partial_{xx}\right)\mathcal{G}_i(s, Z^i_s)ds +  L_t \\
= \notag & - \medint \int_{0}^{t}\hat{\mathcal{L}}_i\mathcal{G}_i (s, Z_s)ds +  \phantom{x}\\
\notag &\medint \int_{0}^{t}\left(b_i(Z_t^i) \partial_x + \tfrac{\sigma_i^2(Z_t)}{2} \partial_{xx}\right)\mathcal{G}_i(s, Z^i_s)ds +  L_t  \\
= \notag &\tfrac{1}{2}\medint \int_{0}^{t} [\sigma_i^2(Z_s) - \hat{\sigma}_i(Z^i_s)]\partial_{xx} \mathcal{G}_i(s, Z^i_s)ds + L_t.
\end{align}

By $\mathcal{G}_i(t, \cdot)\in C^2_P(\mathbb{R})$ and Lemma \ref{lem:POg} we obtain $\partial_{xx} \mathcal{G}_i(s, \cdot) \geqslant 0$ for all $i \in \{1, \dotsc, d\}$. Thus, due to the ordering of $\sigma_i^2$ and $\hat{\sigma}_i^2$, the first term in the right hand side of (\ref{estimate}) is nonnegative. Since $L$ is a continuous local martingale with zero initial data, it follows that $\mathcal{G}_i(t, Z_t)- \mathcal{G}_i(0, Z_0) $ is a  local submartingale.

Let $\tau_n$ be a localizing sequence for $\mathcal{G}_i(t, Z_t)$. For all $t\in[0, T]$, we have
\begin{equation} \label{estimate2}
	\mathcal{G}_i(t\land\tau_n, Z_{t\land\tau_n}) - \mathcal{G}_i(0, Z_0) \xrightarrow[n\rightarrow\infty]{a.s.} \mathcal{G}_i(t, Z_t) - \mathcal{G}_i(0, Z_0).
\end{equation}
Since $\mathcal{G}_i(t, Z_t)$ is a process of class $\mathrm{(D)}$ or locally $L^p$-bounded, $p>1$, it follows that $\mathcal{G}_i(t\land\tau_n, Z_{t\land\tau_n}) - \mathcal{G}_i(0, Z_0)$ is uniformly integrable. Combining almost-sure convergence with the uniformly integrable property, it implies that the convergence (\ref{estimate2}) also takes place in $L^1$,
and therefore, $\mathcal{G}_i(t, Z_t) - \mathcal{G}_i(0, Z_0) $ is a submartingale.
By taking expectations on both sides of (\ref{estimate}) and using the fact that $Z_0 = \hat{Z}_0 $, we obtain the comparison result
\begin{equation}\label{eq:conv_smooth_order}
	\mathbb{E}g(Z^i_T) = \mathbb{E}\mathcal{G}_i(T, Z^i_T) \geqslant \mathcal{G}(0, Z^i_0) = \mathbb{E}[g(\hat{Z}^i_T)]
\end{equation}
for all convex $g \in C_P^2(\mathbb{R})$. 

Now let $g$ be arbitrary convex function on $\mathbb{R}$. From Theorem~3.1.4 in \cite{hiriart1996} we can find, for each $n \in \mathbb{N}$ a convex Lipschitz function $\tilde g_n$ such that $\tilde g_n \le g$ in $[-n,n]$ and $\tilde g_n \le g$ in $\mathbb{R} \setminus [-n,n]$. By \cite{azagra2013} we can find further smooth convex functions $g_n \in C_\text{Lip}^\infty(\mathbb{R})$ such that $\tilde g_n - \tfrac{1}{n} \le g_n \le \tilde g_n$ on all of $\mathbb{R}$. It follows that the sequence $g_n$ converges pointwise to $g$ from below. We observe that $C_\text{Lip}^\infty(\mathbb{R}) \subset C_P^2(\mathbb{R})$ and equation \eqref{eq:conv_order} now follows from \eqref{eq:conv_smooth_order} by monotone convergence. Finally, equation \eqref{eq:p_order} follows by choosing the convex function $g(z_i) = |z_i|^p$.
\end{proof}

\subsection{Upper bound for the asymptotic tail-index}
From $X_t = QY_t + x_*$, the triangle inequality and the unitary invariance of the Euclidean norm, it follows that $|Y_t| \le |X_t| + |x_*|$. Thus, we have
\begin{align}\label{eq:triangle}
\frac{\beta^{p/2}}{\lambda_1^p} \mathbb{E}[|Z_t^1|^p] &= \mathbb{E}[| Y_t^1|^p] \le \mathbb{E}[| Y_t|^p]  \notag \\
&\le 2^p\left(\mathbb{E}[|X_t|^p] + |x_*|^p\right).
\end{align}
Now, let $p > \nu_1$. By Theorem~\ref{thm:main1}, Fatou's Lemma, and the properties of the skew $t$-distribution \eqref{eq:stationary_law}
\begin{align}\label{eq:fatou}
\limsup_{t \to \infty} \mathbb{E}[|Z_t^1|^p] &\ge \liminf_{t \to \infty} \mathbb{E}[|Z_t^1|^p] \\
&\ge \liminf_{t \to \infty} \mathbb{E}[|\hat Z_t^1|^p] \ge [|\hat Z_\infty^1|^p] = \infty. \notag
\end{align}
Together with \eqref{eq:triangle} this implies that also 
\[\limsup_{t \to \infty} \mathbb{E}[|X_t|^p] = \infty,\]
and it follows from \eqref{eq:tail_index} that the asymptotic tail-index satisfies $\eta \le p$ for all $p > \nu_1$. Finally, the parameter $\nu_1$ in the limit distribution of $\hat Z^1$ is given by $\nu_1 = 1 + a_1^{-1}$, where $a_1$ can be found in \eqref{substitute}. Thus, we immediately obtain the following result.

\begin{theorem} \label{thm:main2}
The asymptotic tail-index $\eta$ of \eqref{hsgd} has the upper bound
\begin{equation} \label{upper_bound}
	\eta \le \eta^* := 1+\frac{2nB(\lambda_1^2 + n\delta)}{\gamma\lambda_1^4}.
\end{equation}
\end{theorem}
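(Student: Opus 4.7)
The plan is to bootstrap the upper bound directly from the convex-order comparison of Theorem~\ref{thm:main1}. In outline, I would (i) reduce a $p$-moment bound on $|X_t|$ to one on the first rescaled component $Z_t^1$ by inverting the SVD-based change of variables, (ii) use Theorem~\ref{thm:main1} with $g(z)=|z|^p$ to dominate the $p$-moment of $Z_t^1$ from below by that of the decoupled Pearson diffusion $\hat Z_t^1$, and (iii) identify $p > \nu_1$ as the regime where the Pearson IV moments blow up at stationarity, and hence at large times, which then forces $\mathbb{E}[|X_t|^p]$ to diverge as well.

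For step (i), I would start from $Y_t = Q^{\mathrm{T}} X_t - Q^{\mathrm{T}} x^\ast$ together with $Z_t^1 = (\lambda_1/\sqrt{\beta})\,Y_t^1$, and combine the unitary invariance $|Y_t|=|X_t-x^\ast|$ with the triangle inequality to obtain $|Z_t^1| \le (\lambda_1/\sqrt{\beta})(|X_t|+|x^\ast|)$. Raising to power $p$ and taking expectations yields
$$\frac{\beta^{p/2}}{\lambda_1^p}\,\mathbb{E}[|Z_t^1|^p] \;\le\; 2^p\bigl(\mathbb{E}[|X_t|^p]+|x^\ast|^p\bigr),$$
so that divergence on the left forces divergence on the right. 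For step (ii), since $g(z)=|z|^p$ is convex for every $p\ge 1$, Theorem~\ref{thm:main1} gives $\mathbb{E}[|Z_t^1|^p] \ge \mathbb{E}[|\hat Z_t^1|^p]$, decoupling the problem into a one-dimensional Pearson IV question.

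For step (iii), I would use that $\hat Z^1$ is a one-dimensional Pearson type IV diffusion, which is (exponentially) ergodic with stationary law~\eqref{eq:stationary_law}; hence $\hat Z_t^1 \Rightarrow \hat Z_\infty^1$ in distribution. Fatou's lemma for weak convergence of the nonnegative functional $|\,\cdot\,|^p$ then yields $\liminf_{t\to\infty}\mathbb{E}[|\hat Z_t^1|^p] \ge \mathbb{E}[|\hat Z_\infty^1|^p]$, and a direct inspection of the density~\eqref{eq:stationary_law} shows that this limit equals $+\infty$ exactly when $p > \nu_1 = 1 + a_1^{-1}$. Chaining (i)--(iii) gives $\limsup_{t\to\infty}\mathbb{E}[|X_t|^p]=+\infty$ for every $p>\nu_1$, so by the definition~\eqref{eq:tail_index} the asymptotic tail-index satisfies $\eta \le \nu_1$. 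A direct substitution of $a_1 = \gamma\lambda_1^4 / [2nB(\lambda_1^2+n\delta)]$ from~\eqref{substitute} converts $1 + a_1^{-1}$ into the claimed expression $\eta^\ast$.

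The main subtlety lies in step (iii): one needs divergence of a time-$t$ moment, not merely of a stationary moment, and weak convergence does not a priori transfer $p$-moments. Fatou for weak convergence resolves this, but one must be careful that the argument tolerates the value $+\infty$. If this is felt to be too indirect, an alternative is to bypass ergodicity and establish directly, using the form of the transition density of the Pearson IV diffusion and a comparison of tails with the stationary density, that $\mathbb{E}[|\hat Z_t^1|^p] = +\infty$ for every sufficiently large $t$ whenever $p > \nu_1$. Everything else amounts to unwinding the change of variables~\eqref{substitute} and a routine application of the triangle inequality.
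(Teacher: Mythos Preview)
Your proposal is correct and follows essentially the same route as the paper: the paper also reduces to the first rescaled component via the SVD change of variables and the triangle inequality to obtain exactly your displayed inequality, applies Theorem~\ref{thm:main1} with $g(z)=|z|^p$, and then invokes Fatou's lemma together with the stationary Pearson~IV law to conclude that $\liminf_{t\to\infty}\mathbb{E}[|\hat Z_t^1|^p]=\infty$ for $p>\nu_1$, before substituting $a_1$ from~\eqref{substitute}. Your explicit discussion of the Fatou step (weak convergence of $\hat Z_t^1$ to its stationary law, tolerating the value $+\infty$) is in fact slightly more careful than the paper's one-line invocation.
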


\subsection{Lower bound for the asymptotic tail-index}
For better readability, we rewrite \eqref{hsgd} as the following form
\begin{equation} \label{eq:sde2}
	dX_t = F(X_t)dt + G(X_t) dB_t
\end{equation}
with
\begin{align*}
	F(X_t) &= -\gamma\left[\frac{1}{n}A^{\mathrm{T}}(AX_t -b) + \delta X_t\right], \\
	G(X_t) &= \gamma\sqrt{\frac{1}{n^2 B}\|AX_t - b\|^2A^{\mathrm{T}}A}.
\end{align*}
Under a certain assumption on the learning rate, we can prove (see Supplement \ref{subsec:lower_bound} for details) that for all $\rho\in (0, \eta_*)$
\begin{small}
\begin{align}\label{ass_5_1}
&\limsup_{|x |\rightarrow\infty}\frac{(1 + |x|^2)\left[2x^{\mathrm{T}}F(x)+ |G(x)|^2\right]- (2-\rho)|x^{\mathrm{T}}G(x)|^2}{|x|^4} \notag \\
&\phantom{XX}<-C_1, 
\end{align}
\end{small}
where $C_1$ is a positive constant and
\[
	\eta_* := 1 + \frac{2n(\lambda_1^2 + n\delta)}{\gamma \lambda_1^4} - \frac{ \sum_{i=2}^{d}\lambda_i^2 }{\lambda_1^2} > 0.
\]
By Theorem 5.2 in \cite{LMY2019}, the solution $X_t$ of the SDE \eqref{eq:sde2}) satisfies
\begin{eqnarray*}
	\sup_{0\leqslant t < \infty}\mathbb{E}|X_t|^\rho \leqslant C_2
\end{eqnarray*} 
with $C_2$ a positive constant. Then we have the following theorem. 
\begin{theorem} \label{thm:main3}
Suppose that the learning rate $\gamma$ satisfies
\[
	\gamma < \overline{\gamma} =: \frac{2nB(\lambda_1^2 + n\delta)}{ \lambda_1^2 \sum_{i=1}^{d}\lambda_i^2} ,
\]
then the asymptotic tail-index $\eta$ of \eqref{hsgd} has the lower bound
\begin{equation} \label{lower_bound}
	1+\frac{2nB(\lambda_1^2 + n\delta)}{\gamma\lambda_1^4} - \frac{ \sum_{i=2}^{d}\lambda_i^2}{\lambda_1^2} = \eta_* \le \eta.
\end{equation}
\end{theorem}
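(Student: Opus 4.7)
The plan is to reduce the lower-bound problem on the asymptotic tail-index $\eta$ to establishing a uniform-in-time $\rho$-th moment bound on $X_t$ for every $\rho \in (0, \eta_*)$; by the definition \eqref{eq:tail_index} such bounds immediately yield $\eta \ge \eta_*$. To obtain each moment bound I would invoke Theorem~5.2 of \cite{LMY2019}, which is exactly tailored to this setting: if the Khasminskii-type Lyapunov inequality \eqref{ass_5_1} holds with some negative constant $-C_1 < 0$, then $\sup_{t\ge 0} \mathbb{E}|X_t|^\rho < \infty$. The whole proof therefore reduces to verifying \eqref{ass_5_1} under the learning-rate hypothesis $\gamma < \overline{\gamma}$ for every $\rho$ strictly below $\eta_*$.

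Verifying \eqref{ass_5_1} is the main technical step, and I would handle it by collapsing the radial/directional limsup to a one-dimensional quadratic estimate. Computing from the closed forms of $F$ and $G$,
\begin{align*}
2x^\top F(x) &= -\tfrac{2\gamma}{n}\|Ax\|^2 + \tfrac{2\gamma}{n}\, x^\top A^\top b - 2\gamma\delta\|x\|^2, \\
|G(x)|^2 &= \tfrac{\gamma^2}{n^2 B}\|Ax-b\|^2 \textstyle\sum_i \lambda_i^2, \\
|x^\top G(x)|^2 &= \tfrac{\gamma^2}{n^2 B}\|Ax-b\|^2 \|Ax\|^2.
\end{align*}
Writing $x = r u$ with $u \in \mathbb{S}^{d-1}$ and setting $q := u^\top A^\top A u \in [\lambda_d^2, \lambda_1^2]$, all sub-leading terms (those involving $b$ or the extra $1$ in $1+|x|^2$) contribute at order $r^{-1}$ or lower after division by $|x|^4 = r^4$. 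The limsup then collapses to the supremum over $q \in [\lambda_d^2, \lambda_1^2]$ of the scalar quadratic
\begin{equation*}
h(q) := -\tfrac{2\gamma}{n}q - 2\gamma\delta + \tfrac{\gamma^2}{n^2 B} q \textstyle\sum_i \lambda_i^2 - (2-\rho)\tfrac{\gamma^2}{n^2 B} q^2,
\end{equation*}
so \eqref{ass_5_1} reduces to proving that $h(q)$ is strictly negative, uniformly on $[\lambda_d^2, \lambda_1^2]$.

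The remainder is book-keeping on this quadratic. The worst direction is the top right singular vector of $A$, corresponding to $q = \lambda_1^2$, and an algebraic rearrangement shows that $h(\lambda_1^2) < 0$ is \emph{equivalent} to $\rho < \eta_*$; this is precisely how the explicit expression for $\eta_*$ arises. The role of the hypothesis $\gamma < \overline{\gamma}$ is to control $h$ at the opposite endpoint $q = \lambda_d^2$ (and uniformly on the interval, bearing in mind that the coefficient of $q^2$ changes sign as $\rho$ crosses $2$): substituting $\gamma \lambda_1^2 \sum_i \lambda_i^2 < 2nB(\lambda_1^2 + n\delta)$ into the evaluation at $\lambda_d^2$ yields $h(\lambda_d^2) < 0$ with room to spare. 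Taking the worst of the two endpoint bounds produces the constant $-C_1 < 0$ required by \eqref{ass_5_1}, and the conclusion $\eta \ge \eta_*$ follows from Theorem~5.2 of \cite{LMY2019} and the arbitrariness of $\rho \in (0,\eta_*)$. I expect the main obstacle to be making this last step fully uniform in $\rho$ on $(0, \eta_*)$: the two hypotheses $\gamma < \overline{\gamma}$ and $\rho < \eta_*$ must be balanced so that both endpoint inequalities remain strictly negative with a constant bounded away from zero, which is delicate when $\eta_* > 2$ and the quadratic loses concavity, forcing one to argue separately from each endpoint rather than from a single interior extremum.
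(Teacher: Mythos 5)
Your proposal follows essentially the same route as the paper's proof: both invoke Theorem~5.2 of \cite{LMY2019}, both verify the Khasminskii--Lyapunov condition \eqref{ass_5_1} by collapsing the radial limsup to a scalar quadratic in the Rayleigh quotient $m = x^\top A^\top A x/|x|^2 \in [\lambda_d^2,\lambda_1^2]$, and both check the two endpoints of this interval, identifying $m=\lambda_1^2$ as the binding one (which yields the expression for $\eta_*$) and using $\gamma < \overline{\gamma}$ to dispose of $m=\lambda_d^2$. Your computation of $h(q)$ agrees with the paper's $-\tfrac{\gamma^2}{n^2B}\,q(m,\rho)$.

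One remark on the ``obstacle'' you flag at the end: it is not actually there. You worry about losing convexity of $h$ in $q$ when $\rho$ drops below $2$, and about uniformity of the constant $C_1$ over $\rho\in(0,\eta_*)$. Neither is needed. First, since the set $\{p \ge 0: \limsup_t \mathbb{E}|X_t|^p < \infty\}$ is downward closed (Jensen), establishing uniform moment bounds only for $\rho\in[2,\eta_*)$ already gives $\eta\ge\eta_*$; the paper restricts to exactly this range, where $2-\rho\le 0$ so $q(\cdot,\rho)$ is concave (equivalently $h$ convex) and the endpoint argument applies cleanly. The hypothesis $\gamma<\overline{\gamma}$ is what guarantees $\eta_*>2$, so this range is nonempty. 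Second, no uniformity in $\rho$ is required: Theorem~5.2 of \cite{LMY2019} is applied separately for each fixed $\rho$, yielding a $\rho$-dependent bound $\sup_t \mathbb{E}|X_t|^\rho \le C_2(\rho)$, and taking $\rho \uparrow \eta_*$ is then purely a statement about the supremum defining the tail-index, not a limit of estimates.
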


\subsection{Discussion of theoretical results}
\label{subsec:discussion}
In comparison to \cite{GSZ2021}, we note the following differences and similarities. In our setting, the data distribution is completely arbitrary, since all results are given conditional on the data matrix $A$. In \cite{GSZ2021} on the other hand, the more restrictive assumption of an isotropic Gaussian data distribution is made. Moreover, our tail-index bounds \eqref{upper_bound} and \eqref{lower_bound} are quantitative and explicit, whereas \citet{GSZ2021} describe when a phase transition of the asymptotic tail-index $\eta$ from $\eta < 2$ to $\eta > 2$ occurs, but do not give further quantitative estimates of $\eta$. 

Some further interesting observations can be made when we consider the dependency of $\eta$ on the meta-parameters of the stochastic gradient descent procedure:
\begin{corollary}
The upper and lower bounds of the tail-index are increasing in the regularization parameter $\delta$ and batch size $B$, and are decreasing in the learning rate $\gamma$ and the first singular value $\lambda_1$ of the data matrix $A$.
\end{corollary}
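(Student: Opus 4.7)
The plan is to compute the partial derivatives of the two bounds
\[
\eta^{*} = 1 + \frac{2nB(\lambda_1^{2} + n\delta)}{\gamma \lambda_1^{4}}, \qquad \eta_{*} = \eta^{*} - \frac{\sum_{i=2}^{d} \lambda_i^{2}}{\lambda_1^{2}},
\]
with respect to each of the four meta-parameters and read off the signs. The key structural observation is that $\eta_{*}$ and $\eta^{*}$ differ only by the term $-\sum_{i=2}^{d} \lambda_i^{2}/\lambda_1^{2}$, which depends on none of $\delta$, $B$, or $\gamma$, so three of the four monotonicity claims can be handled once and for all via $\eta^{*}$.

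First I would treat $\delta$, $B$, and $\gamma$ jointly. The quantity $\eta^{*}-1$ is linear and increasing in $\delta$ (coefficient $2n^{2} B/(\gamma\lambda_1^{2})>0$), linear and increasing in $B$, and of the form $C/\gamma$ with $C>0$, hence strictly decreasing in $\gamma$. Since the correction term above is independent of these three parameters, the identical monotonicities transfer verbatim from $\eta^{*}$ to $\eta_{*}$.

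The only nontrivial case is the dependence on $\lambda_1$. Rewriting $\eta^{*}-1 = (2nB/\gamma)\bigl(\lambda_1^{-2} + n\delta\,\lambda_1^{-4}\bigr)$ makes it manifest that the upper bound is decreasing in $\lambda_1>0$. For the lower bound the correction $-\sum_{i=2}^{d}\lambda_i^{2}/\lambda_1^{2}$ is in fact \emph{increasing} in $\lambda_1$, so one must verify that the decrease of the main term still dominates. Differentiating gives
\[
\frac{\partial \eta_{*}}{\partial \lambda_1} = -\frac{2}{\lambda_1^{3}}\left[\frac{2nB}{\gamma}\left(1 + \frac{2n\delta}{\lambda_1^{2}}\right) - \sum_{i=2}^{d} \lambda_i^{2}\right],
\]
which is negative iff the bracket is positive. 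I expect this sign check to be the one point requiring more than mechanical computation, and it is closed precisely by the standing learning-rate hypothesis $\gamma<\overline{\gamma}$ of Theorem~\ref{thm:main3}: rearranging the definition of $\overline{\gamma}$ yields $\sum_{i=1}^{d} \lambda_i^{2} < (2nB/\gamma)\bigl(1 + n\delta/\lambda_1^{2}\bigr)$, so a fortiori $\sum_{i=2}^{d} \lambda_i^{2} < (2nB/\gamma)\bigl(1 + 2n\delta/\lambda_1^{2}\bigr)$, giving the required positivity of the bracket and completing the $\lambda_1$-monotonicity.
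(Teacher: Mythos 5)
Your proof is correct, and it actually supplies something the paper leaves out: the corollary is stated without a proof, so comparison against ``the paper's own proof'' isn't quite possible — the authors evidently regard it as visible by inspection. For $\delta$, $B$, $\gamma$ that is fair, and your treatment of them is the obvious one. (Small slip: $\partial_\delta(\eta^*-1)=2n^2B/(\gamma\lambda_1^4)$, not $2n^2B/(\gamma\lambda_1^2)$; the sign and the argument are unaffected.)

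The genuinely valuable part of your write-up is the $\lambda_1$-dependence of the lower bound. You correctly observe that while $\eta^*$ is manifestly decreasing in $\lambda_1$, the correction term $-\sum_{i\ge 2}\lambda_i^2/\lambda_1^2$ in $\eta_*$ pushes in the opposite direction, so the claim for $\eta_*$ is \emph{not} a matter of mere inspection. Your derivative
\[
\frac{\partial \eta_{*}}{\partial \lambda_1} = -\frac{2}{\lambda_1^{3}}\left[\frac{2nB}{\gamma}\Bigl(1 + \frac{2n\delta}{\lambda_1^{2}}\Bigr) - \sum_{i=2}^{d} \lambda_i^{2}\right]
\]
is correct, and closing the sign check via the standing hypothesis $\gamma<\overline{\gamma}$ (which rearranges to $\sum_{i=1}^{d}\lambda_i^2 < \tfrac{2nB}{\gamma}(1+n\delta/\lambda_1^2)\le\tfrac{2nB}{\gamma}(1+2n\delta/\lambda_1^2)$) is exactly the right move — and indeed the only sensible regime in which to assert monotonicity of $\eta_*$, since outside it Theorem~3.3 gives no lower bound at all. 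This makes your argument more careful than the paper, which neither flags the competing monotonicity in the correction term nor records that the conclusion for $\eta_*$ relies on the learning-rate restriction. One caveat worth stating explicitly if you polish this up: $\overline{\gamma}$ itself depends on $\lambda_1$ (it is decreasing in $\lambda_1$), so the monotonicity statement for $\eta_*$ holds pointwise on the $\lambda_1$-dependent set $\{\gamma<\overline{\gamma}(\lambda_1)\}$ rather than on a fixed $\gamma$-interval.
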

This result agrees with Theorem~4 in \cite{GSZ2021}, obtained under the assumption of an isotropic data distribution $a_i\sim N(0, \sigma^2 I_d)$, in all aspects, except the dependency on dimension $d$.\footnote{With the key difference that \eqref{upper_bound} and \eqref{lower_bound} give a \emph{quantitative} description of all dependencies, while \cite{GSZ2021} is only \emph{qualitative} in nature.} While \citet{GSZ2021} report decreasing dependency on $d$, our tail-index bounds do not explicitly depend on dimension $d$. Nevertheless, the two results can be reconciled as follows: Under the assumptions in \cite{GSZ2021}, the data matrix $A = (a_i)$ is random with $\mathbb{E}(A^{\mathrm{T}}A)=\sigma^2 I_d$, and the product matrix $W:= A^{\mathrm{T}}A$ follows the so-called Wishart ensemble (cf. \citet{Wishart1928}). 
Moreover, from Theorem 1.1 in \cite{Johnstone2001} it follows that for large $d$ the maximum eigenvalue of $W$ is 
\begin{equation} \label{largesteigen}
	\lambda^2_1 = \sigma^2\left[( \frac{1}{\sqrt{r}}+1)^2 d + r^{\frac{1}{6}}(\frac{1}{\sqrt{r}}+1)^{\frac{4}{3}}d^{\frac{1}{3}}\chi\right],
\end{equation}
where the ratio $r=\frac{d}{n-1}<1$ and the distribution function of the random variable $\chi$ is the well-known Tracy-Widom distribution of order $1$ (cf. \citet{TW1996}). From (\ref{largesteigen}), we can calculate the average of $\lambda^2_1$ as
\[
	\mathbb{E}\left[\lambda^2_1\right] = \sigma^2( \frac{1}{\sqrt{r}}+1)^2 d = \sigma^2 (\sqrt{n-1} + \sqrt{d})^2
\]  
and $\lambda^2_1$ fluctuates around this expectation over a narrow region of width $O(d^\frac{1}{3})$. Substituting $\lambda_1^2$ by its expectation in \eqref{upper_bound} and \eqref{lower_bound} we can now see that $\eta_*$ and $\eta^*$ decrease in both variance $\sigma^2$ and $d$, consistent with \cite{GSZ2021}.

\section{Experiments}
\label{sec:experiments}

Based on the upper and lower bounds in Theorems~\ref{thm:main2}~and~\ref{thm:main3}, we present some experiments to illustrate the tail behavior of SGD and the factors influencing the tail-index.  
The procedure of our experiments contains the following steps.
\begin{enumerate}
\item Given $[\mathrm{data}|b]$, we transform the data to be on a similar scale by the linear scaling
\[
	A = \frac{\mathcal{\mathrm{data}} - \min\{\mathcal{\mathrm{data}}\}}{\max\{\mathcal{\mathrm{data}}\} - \min\{\mathcal{\mathrm{data}}\}}.
\]

\item
Let $K$ be the iteration number of SGD. We apply \ref{sgd} to solve \ref{erm}.
The final state $x_K\in \mathbb{R}^d$ is a random vector. 
\item
Repeat the second step $1000$ times for different initial points and obtain $1000$ different samples of $x_K$.
\item
For further distributional analysis we project $x_K$ via $\mathrm{y}= q_1^{\top}x_K $ on the dominant direction, given by the first right singular vector $q_1$ of $A$. Then we utilize the $1000$ samples to obtain the empirical complementary cumulative distribution function (ccdf) of $\mathrm{y}$.
\end{enumerate}

\subsection{Datasets}

\textit{Synthetic data.}
We first validate our results in the same synthetic setup used in \cite{GSZ2021}. All data points are drawn from isotropic Gaussian distributions, precisely, the $i$-th row of $\mathcal{X}\in\mathbb{R}^{n\times d}$ contains $\chi_i\in\mathbb{R}^{d} \sim \mathcal{N}(0, I_d)$. Then given $x\in\mathbb{R}^{d} \sim \mathcal{N}(0, 3I_d)$ we draw the response vextor $b\in\mathbb{R}^n$ with components $b_i \sim\mathcal{N}(\chi_i x, 3)$. We set the number $n$ of the synthetic data to be $2000$ through our experiments.

\textit{Real data.}
In our second setup we conduct our experiments on the handwritten digits dataset from the Scikit-learn python package (cf. \citet{pedregosa2011}) and a random feature model proposed in \cite{RR2007}. 
The digits dataset contains $n=1797$ images of handwritten digits in a $8\times8$ pixel format. The pixels are stacked into vectors of length $n_0 = 8^2 =64$ resulting in a raw data matrix $\mathcal{Y}\in\mathbb{R}^{n\times n_0}$ and the class label $b_i = \{0, 1, \cdots, 9\}$ is used as response vector. 
For the random feature model, we choose a dimension $d$ and draw a random weight matrix $W\in\mathbb{R}^{n_0\times d}$ having standard Gaussian entries. The feature matrix $W\in\mathbb{R}^{n\times d}$ is given by
\[
	\mathcal{Z} = \sigma\left( \frac{\mathcal{Y}W}{\sqrt{n_0}}\right)\in\mathbb{R}^{n\times d},
\]
where $\sigma(\cdot)$ is a rescaled ReLu activation function.

\textit{Parameters.} Tables \ref{table_1} and \ref{table_2} contain all parameter values used for the figures.

\begin{table}[htbp]
\caption{Parameters used  for Figure \ref{fig_1}}
\label{table_1}
\vskip 0.15in
\begin{adjustbox}{center}
%\begin{center}
\begin{tiny}
%\begin{sc}
%\centering
\scalebox{0.8}{
\begin{tabular}{ccccccccccc}
\toprule
Figure \ref{fig_1} & data & $d$ &  $K$   & $\gamma$ & $\overline{\gamma}$ & $\delta$ & $B$ & $\lambda_1$ & $\eta_*$& $\eta^*$  \\
\toprule
(a), (d), (h)& $\mathcal{X}$  &$200$ & $1000$  & $0.015$ & $0.037$ & $0$ & $1$  & $319.83$  & $3.56$ & $3.61$  \\
(b), (e), (i)&$\mathcal{Y}$  &$64$ &$10000$ & $0.100$ & $0.133$ & $0$ & $1$  & $137.07$ & $2.48$ & $2.91$ \\
(c), (f), (j)&$\mathcal{Z}$  &$200$ &$10000$ & $0.200$ & $0.304$ & $0$ & $1$  & $93.49$ & $2.70$ & $3.06$ \\
\bottomrule

\end{tabular}
}
\end{tiny}
\end{adjustbox}
\end{table}

\begin{table}[htbp]
\caption{Parameters used for Figure \ref{fig_2}}
\label{table_2}
\vskip 0.15in
\begin{adjustbox}{center}
%\begin{center}
\begin{tiny}
%\centering
\scalebox{0.8}{
\begin{tabular}{cccccccc}
\toprule
Figure \ref{fig_2} & data & $d$ &  $K$   & $\gamma$ & $\delta$ & $B$ & $\lambda_1$  \\
\toprule
(a)&$\mathcal{X}$  &$200$ &$1000$ & $0.010$ to $0.025$ & $0$ & $1$  & $353.10$ \\
(b)&$\mathcal{X}$  &$200$ &$1000$ & $0.10$ & $0$ & $1$ to $4$  & $319.83$  \\
(c)&$\mathcal{X}$  & $100$ to $260$ &$1000$ & $0.02$ & $0$ & $1$  & $223.05$ to $360.08$  \\
\midrule
(d)&$\mathcal{Z}$  &$200$ &$10000$ & $0.10$ to $0.25$ & $0$ & $1$  & $93.49$  \\
(e)&$\mathcal{Z}$  &$200$ &$10000$ & $0.10$ & $0$ & $1$ to $4$  & $93.49$  \\
(f)&$\mathcal{Z}$  &$80$ to $360$ &$10000$ & $0.20$ & $0$ & $1$  & $58.25$ to $106.61$  \\
\bottomrule
\end{tabular}
}
\end{tiny}
\end{adjustbox}
\end{table}

\subsection{Empirical results}
\textit{Heavy tailed behavior.} To verify the heavy-tailed behavior of $\mathrm{y}$ as well as our tail-index bounds from Theorems~\ref{thm:main2} and~\ref{thm:main3} and the distributional approximation suggested by \eqref{eq:stationary_law}, we use MLE-estimation to fit our centered data as% From Figure \ref{fig_1} (a), (b) and (c), it can be seen that
\[
	\mathrm{z} := \mathrm{y} - \textrm{mean}\{\mathrm{y}\} \sim \kappa t(\nu).
\]
where $t(\nu)$ denotes a t-distribution with parameter $\nu$ and $\kappa$ is a scaling factor.\footnote{Eq.~\eqref{eq:stationary_law} actually implies a skew t-distribution, but we use a symmetric one to avoid the estimation of an additional parameter $\mu$.} The QQ-plots in Figure~\ref{fig_1}(a), (b) and (c) show that the t-distribution provides an excellent fit to the empirical data, validating our use of Pearson diffusions to approximate SGD. For comparison, we also fit (using MLE-estimation) an $\alpha$-stable distribution, as suggested in \cite{GSZ2021}, to the same data and show the resulting QQ-plots in Figure~\ref{fig_1}(e), (f) and (g). It can be seen that the fitted $\alpha$-stable distribution massively overestimates the heaviness of tails, in particular for the random feature model on real data. We complement these figures by a Kolmogorov-Smirnov test (cf. Chapter 4.4 in \cite{corder2014}) testing for the goodness-of-fit of the t-distribution and the $\alpha$-stable distribution respectively; see~Table \ref{table_3} for results.

\begin{figure}[!htbp]
\centering
\subfigure[]{
\includegraphics[width=0.45\columnwidth]{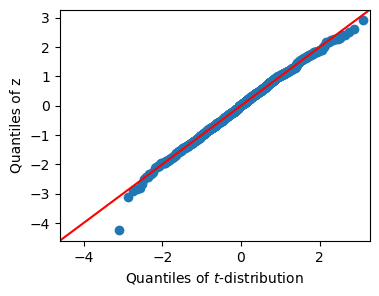}
}
\hspace{0in}
\subfigure[]{
\includegraphics[width=0.45\columnwidth]{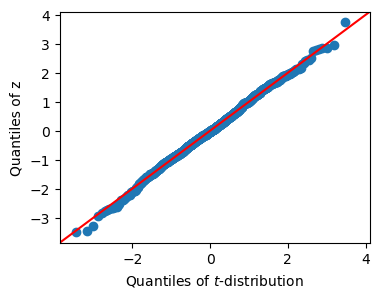}
}
\hspace{0in}
\subfigure[]{
\includegraphics[width=0.45\columnwidth]{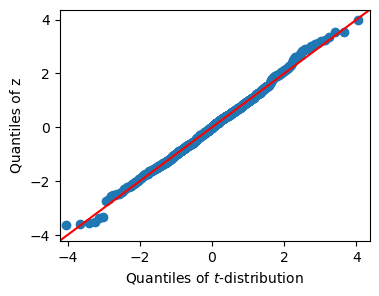}
}
\subfigure[]{
\includegraphics[width=0.45\columnwidth]{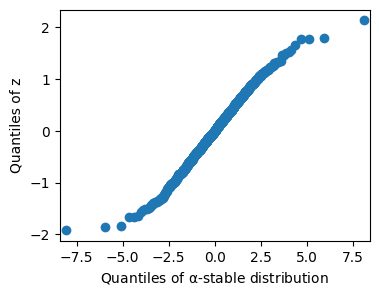}
}
\hspace{0in}
\subfigure[]{
\includegraphics[width=0.45\columnwidth]{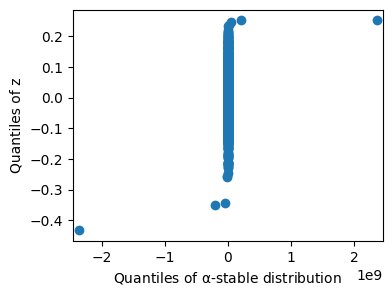}
}
\hspace{0in}
\subfigure[]{
\includegraphics[width=0.45\columnwidth]{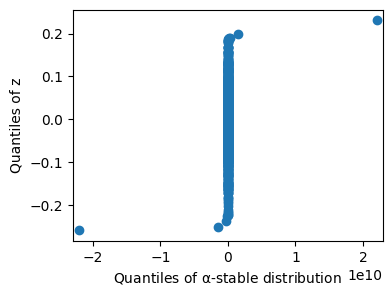}
}
\hspace{0in}
\subfigure[]{
\includegraphics[width=0.45\columnwidth]{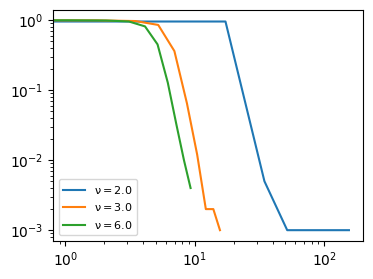}
}
\hspace{0in}
\subfigure[]{
\includegraphics[width=0.45\columnwidth]{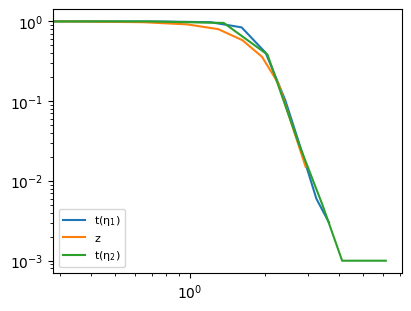}
}
\hspace{0in}
\subfigure[]{
\includegraphics[width=0.45\columnwidth]{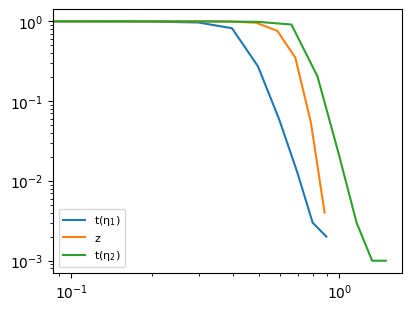}
}
\hspace{0in}
\subfigure[]{
\includegraphics[width=0.45\columnwidth]{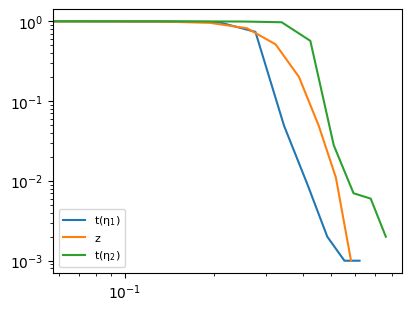}
}
\caption{(a)-(c) Quantile-Quantile plots of fitted t-distribution against empirical SGD iterates; (d)-(f) Quantile-Quantile plots of fitted $\alpha$-stable distribution against empirical SGD iterates. (g) Complementary cumulative distribution function (ccdf) of $t$-distribution with different tail indices; (h)-(j) Comparison between ccdf of empirical data and t-distribution parameterized by upper tail-index bound $\eta^*$ and lower bound $\eta_*$.
\label{fig_1}}
\end{figure}

Moreover, in Figure \ref{fig_1}(h), (i) and (j) we plot (in doubly logarithmic coordinates) the empirical ccdf of the SGD iterates $\mathrm{z}$, together with the ccdf of the t-distribution parametrized by our lower and upper bound $\eta_*$ and $\eta^*$. It can be seen that the empirical ccfd, including its tail, is nicely sandwiched between upper and lower bound, validating Theorems~\ref{thm:main2} and~\ref{thm:main3}. Additionally, we once more confirm the heavy-tailed behavior of SGD iterates as already observed in \cite{ssg2019, hm2021, GSZ2021}.

\begin{table}[htbp]
\caption{Kolmogorov-Smirnov test. The null hypothesis $H_0$ is that two distributions are identical, the alternative $H_1$ is that they are not identical. For the t-distribution we use one-sided null hypothesis $\overline{H}_0$: $F_{\textrm{z}}(x) \geqslant F_{\kappa t(\eta^*)}(x)$ for $x$, the alternative $\overline{H}_1$: $F_{\textrm{z}}(x) < F_{\kappa t(\eta^*)}(x)$ for at least one $x$; The one-sided null hypothesis $\underline{H}_0$: $F_{\textrm{z}}(x) \leqslant F_{\kappa t(\eta_*)}(x)$ for all $x$, the alternative $\underline{H}_1$: $F_{\textrm{z}}(x) > F_{\kappa t(\eta^*)}(x)$ for at least one $x$.}
\label{table_3}
\vskip 0.15in
\begin{adjustbox}{center}
%\begin{center}
\begin{tiny}
%\centering
\scalebox{0.95}{
\begin{tabular}{cccccc}
\toprule
Figure \ref{fig_1} & $\kappa$ & hypothesis & K-S statistic & $p$-value & decesion \\
\toprule
(d)& $1.0$& $H_0$, $H_1$ & $0.6$ & $0.052 > 0.05 $ & not reject $H_0$  \\
\midrule
(e)& $1.0$& $H_0$, $H_1$ & $0.8$ & $0.002 < 0.05$ & reject $H_0$  \\
\midrule
(f)& $1.0$& $H_0$, $H_1$ & $0.9$ & $0.0002 > 0.05$ & reject $H_0$  \\
\midrule
\multirow{2}{0.25cm}{(h)}& \multirow{2}{0.85cm}{$0.320$}& $\overline{H}_0$, $\overline{H}_1$ & $0.2$ & $0.68 > 0.05$ & not reject $\overline{H}_0$  \\
& & $\underline{H}_0$, $\underline{H}_1$ & $0.0$ & $1.00 > 0.05$ & not reject $\underline{H}_0$  \\
\midrule
\multirow{2}{0.25cm}{(i)}&\multirow{2}{0.85cm}{$0.045$}&$\overline{H}_0$, $\overline{H}_1$  & $0.1$ & $0.91 > 0.05$ & not reject $\overline{H}_0$\\
& & $\underline{H}_0$, $\underline{H}_1$& $0.1$ & $0.91 > 0.05$ & not reject $\underline{H}_0$ \\
\midrule
\multirow{2}{0.25cm}{(j)}&\multirow{2}{0.85cm}{$0.050$} &$\overline{H}_0$, $\overline{H}_1$  & $0.0$ & $1.00 > 0.05$ & not reject $\overline{H}_0$\\
& & $\underline{H}_0$, $\underline{H}_1$ & $0.3$ & $0.42 > 0.05$ & not reject $\underline{H}_0$\\
\bottomrule
\end{tabular}
}
\end{tiny}
\end{adjustbox}
\end{table}

\textit{Increasing learning rate / Decreasing batch size.}
To illustrate the effect of the learning rate $\gamma$, we perform a set of experiments with constant batch size $B=1$, varying only the learning rate $\gamma$. Meanwhile, by fixing $\gamma$, we conduct a series of experiments with varying $B$. In Figure \ref{fig_2} (a),(b), (d) and (e) we can see that increasing $\gamma$ and decreasing $B$ leads to decreasing tail-index $\eta$.

\textit{Increasing dimension.} The dimension $d$ affects the upper and lower bounds via the leading singular value $\lambda_1$ of data matrix $A$ constructed by $\mathcal{X}$ (see the discussion in Section \ref{subsec:discussion}), although it does not appear explicitly in $\eta^*$ and $\eta_*$. In Figure \ref{fig_2} (c) and (f), we explore the effect of varying $d$ and observe that increasing $d$ gives increasing $\lambda_1$ which results in decreasing tail-index $\eta$.

\section{Conclusion}
We have introduced a new method, namely a comparison result in convex stochastic order for homogenized stochastic gradient descent, to obtain explicit upper bounds for the tail-index of stochastic gradient descent. These upper bounds are complemented by lower bounds obtained from results on moment stability of stochastic differential equations. Together, these bounds confirm the heavy-tailed nature of neural network parameters under optimization by SGD and provide insights into the dependency between their tail-index and optimization meta-parameters. One limitation of the method is that we have derived it only for plain SGD with constant learning rate. In future research, the method could be adapted to more advanced optimization methods involving momentum and adaptive choice of learning rate.

\begin{figure}[htbp]
\centering
\subfigure[]{
\includegraphics[width=0.45\columnwidth]{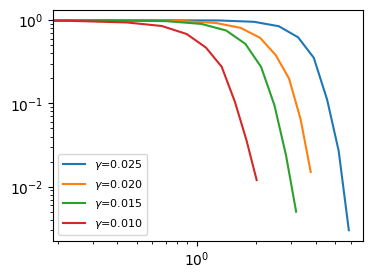}
}
\hspace{0in}
\subfigure[]{
\includegraphics[width=0.45\columnwidth]{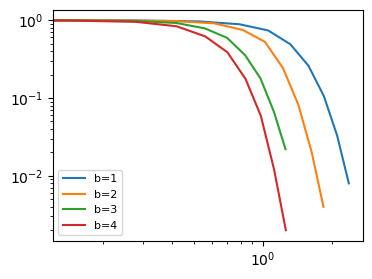}
}
\hspace{0in}
\subfigure[]{
\includegraphics[width=0.45\columnwidth]{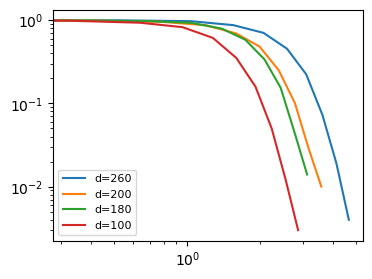}
}
\hspace{0in}
\subfigure[]{
\includegraphics[width=0.45\columnwidth]{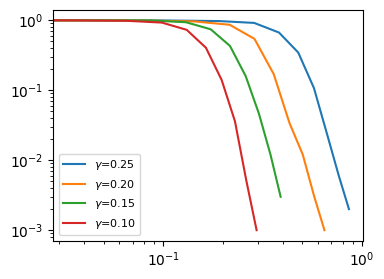}
}
\hspace{0in}
\subfigure[]{
\includegraphics[width=0.45\columnwidth]{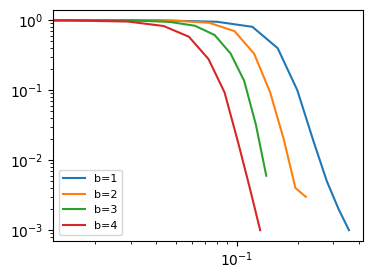}
}
\hspace{0in}
\subfigure[]{
\includegraphics[width=0.45\columnwidth]{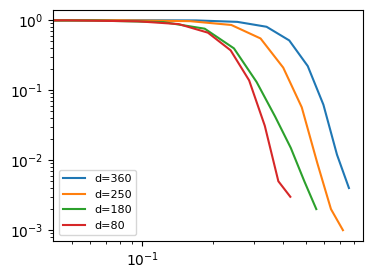}
}
\caption{Empirical complementary cumulative distribution functions on log-log scale for the effect of varying parameters.}
\label{fig_2}
\end{figure}

\bibliography{myref}
\bibliographystyle{plainnat}

%%%%%%%%%%%%%%%%%%%%%%%%%%%%%%%%%%%%%%%%%%%%%%%%%%%%%%%%%%%%%%%%%%%%%%%%%%%%%%%
%%%%%%%%%%%%%%%%%%%%%%%%%%%%%%%%%%%%%%%%%%%%%%%%%%%%%%%%%%%%%%%%%%%%%%%%%%%%%%%
% APPENDIX
%%%%%%%%%%%%%%%%%%%%%%%%%%%%%%%%%%%%%%%%%%%%%%%%%%%%%%%%%%%%%%%%%%%%%%%%%%%%%%%
%%%%%%%%%%%%%%%%%%%%%%%%%%%%%%%%%%%%%%%%%%%%%%%%%%%%%%%%%%%%%%%%%%%%%%%%%%%%%%%
\newpage
\appendix
\onecolumn

\section{Supplementary material}

\subsection{Covariance matrix}\label{sm:converance}
Consider the minibatch stochastic gradient 
\[
	\nabla \tilde{f}_{k}(x) = \frac{1}{B}\sum_{i\in\Omega_k}\nabla f_{i}(x) = \frac{1}{B}\sum_{i\in\Omega_k}\nabla L_{i}(x) +\delta x.
\]
where $B$ is the batchsize and the random set $\Omega_k = \{i_1, \cdots, i_B\}$ consists of $B$ independently identically distributed random integers sampled uniformly from $\{1, 2, \cdots, n \}$.

Let $\nabla \tilde{L}_{k}(x) = \frac{1}{B}\sum_{i\in\Omega_k}\nabla L_{i}(x)$. It can be rewritten as
\[
	\nabla \tilde{L}_{k}(x) = \frac{1}{B}\sum^{n}_{i=1}\nabla L_{i}(x)\mathrm{s}_i,
\]
where the random variable $\mathrm{s}_i = l$ if $l$-multiple $i$'s are sampled in $\Omega_k$, with $0\leqslant l \leqslant B$. The probability of $\mathrm{s}_i = l$ is given by the multinomial distribution
$\mathbb{P}(\mathrm{s}_i = l) = C_{B}^{l}(\frac{1}{n})^l (1-\frac{1}{n})^{B-l}$.
Moreover, we have
\[
	\mathbb{E}[\mathrm{s}_i] = \frac{B}{n}, \quad \mathbb{E}[\mathrm{s}_i\mathrm{s}_j] = \frac{B(B-1)}{n^2}, \quad \mathbb{E}[\mathrm{s}_i\mathrm{s}_i] =\frac{Bn + B(B-1)}{n^2}.
\]
We can also compute
\begin{equation} \label{A1}
	\mathbb{E}[\nabla \tilde{L}_{k}(x)] =  \frac{1}{B}\sum^{n}_{i=1}\nabla L_{i}(x)\mathbb{E}[\mathrm{s}_i] = \frac{1}{n}\nabla L(x)
\end{equation}
and
\begin{equation} 
\begin{aligned}\label{A2}
	&\mathbb{E}[\nabla \tilde{L}_{k}(x)^{\mathrm{T}}\nabla \tilde{L}_{k}(x)] \\
	%&= \frac{1}{B^2}\mathbb{E}\left[\sum^{n}_{i=1}\nabla L_{i}(x)^{\mathrm{T}}\mathrm{s}_i \sum^{n}_{j=1}\nabla L_{j}(x)\mathrm{s}_j\right]\\
	& = \frac{1}{B^2}\mathbb{E}\left[\sum^{n}_{i=1}\sum^{n}_{j=1}\nabla L_{i}(x)^{\mathrm{T}}\nabla L_{j}(x)\mathrm{s}_i \mathrm{s}_j\right] 
	= \frac{1}{B^2}\sum^{n}_{i=1}\sum^{n}_{j=1}\left[\nabla L_{i}(x)^{\mathrm{T}}\nabla L_{j}(x)\mathbb{E}(\mathrm{s}_i \mathrm{s}_j)\right] \\
	&= \frac{1}{B^2}\sum^{n}_{i, j=1}\nabla L_{i}(x)^{\mathrm{T}}\nabla L_{j}(x)\frac{B(B-1)}{n^2}\\
	&\quad + \frac{1}{B^2}\sum^{n}_{i=1}\nabla L_{i}(x)^{\mathrm{T}}\nabla L_{i}(x)\left[\frac{Bn + B(B-1)}{n^2} - \frac{B(B-1)}{n^2}\right] \\
	&= \frac{B-1}{B}\frac{1}{n^2}\nabla L(x)^{\mathrm{T}} \nabla L(x)  + \frac{1}{nB}\sum^{n}_{i=1}\nabla L_{i}(x)^{\mathrm{T}}\nabla L_{i}(x).
\end{aligned}
\end{equation}
Combining (\ref{A1}) with (\ref{A2}) gives
\begin{eqnarray*}
\begin{aligned}
	C(x)  &=  \mathbb{E}\left\{[\nabla \tilde{f}_{k}(x) - \nabla f(x)]^{\mathrm{T}} [ \nabla \tilde{f}_{k}(x) - \nabla f(x)] \right\}\\
		&=  \mathbb{E}\left\{ [\nabla \tilde{L}_{k}(x) - \frac{1}{n}\nabla L(x)]^{\mathrm{T}} [ \nabla \tilde{L}_{k}(x) - \frac{1}{n}\nabla L(x)] \right\} \\
		&= \mathbb{E}[\nabla \tilde{L}_{k}(x)^{\mathrm{T}}\nabla \tilde{L}_{k}(x)] - \frac{1}{n^2}\nabla L(x)^{\mathrm{T}} \nabla L(x) \\
		&= \frac{1}{B}\left[\frac{1}{n}\sum_{i=1}^{n} \nabla L_{i}(x)^{\mathrm{T}}  \nabla L_{i}(x)  -\frac{1}{n^2}\nabla L(x)^{\mathrm{T}} \nabla L(x)\right].
\end{aligned}
\end{eqnarray*}

\subsection{Proofs of some lemmas}
\label{sm:lemma_proof}
\begin{proof}[Proof of Lemma \ref{lem:POg}]
For better readability we suppress the supperscript and subscript $i$ in the following SDE
\begin{align*}
	d\hat{Z}^i_t& = b_i(\hat{Z}_t)dt + \hat{\sigma}_i(\hat{Z}^i_t)dB^i_t,
\end{align*}
where $b_i(z) = -\theta_i(z_i - \mu_i)$ and $\hat{\sigma}_i(z_i)^2 = 2\theta_i a_i (z_i^2 + 1)$.
We consider its Euler-Maruyama approximation
\[
	\hat{Z}_{K, t_{j+1}} = \hat{Z}_{K, t_{i}} + b(\hat{Z}_{K, t_{j}})\Delta t_j + \hat{\sigma}(\hat{Z}_{K, t_{j}})(B_{t_{j+1}} - B_{t_j})
\]
with $t_j =j\frac{T-t}{K} + t$, $j= \{0, 1, \cdots, K\}$ and $\Delta t_j = \frac{T-t}{K}:=\Delta$. Using Theorem 9.7.4 in \cite{KP} we have
\begin{equation}
	\mathcal{G}_{K}(t, x) = \mathbb{E}[g(\hat{Z}_{K, T})|\hat{Z}_{K, t} = x] \rightarrow \mathcal{G}(t, x), \quad t\in[0, T]. \label{POg1}
\end{equation}

Let $\mathcal{A}$ be a transition operator given by
\[
	\mathcal{A}S = S + \Delta b(S) + \hat{\sigma}(S)W
\]
with $W \sim N(0, \Delta)$. We will show that $\mathcal{A}$ satisfies the convex-ordering property
\begin{equation} \label{POg2}
	\mathbb{E}h(S_1) \leqslant \mathbb{E}h(S_2) \Rightarrow \mathbb{E}h(\mathcal{A}S_1) \leqslant\mathbb{E}h(\mathcal{A}S_2)
\end{equation}
for any convex function $h(\cdot)$. 
Let $S_1$, $S_2$ be random vectors which are independent of $W$ and satisfy $\mathbb{E}h(S_1) \leqslant \mathbb{E}h(S_2)$. Due to Stassen's theorem in \cite{Sta65}, we can also assume that $\mathbb{E}(S_2 | S_1) = S_1$. 
It follows from conditional Jensen's inequality that
\begin{equation}\label{POg5}
\begin{aligned}
	\mathbb{E}h(\mathcal{A}S_2) &= \mathbb{E}h(S_2 + \Delta b(S_2) + \hat{\sigma}(S_2)W)\\
						      &= \mathbb{E}[\mathbb{E}h(S_2 + \Delta b(S_2) + \hat{\sigma}(S_2)W) | S_1]\\
						      &\geqslant \mathbb{E}[h(\mathbb{E}(S_2 | S_1) +  \Delta \mathbb{E}(b(S_2) | S_1) +\mathbb{E}( \hat{\sigma}(S_2) | S_1)W )]\\
						      &=\mathbb{E}[h( S_1 +  \Delta b(S_1) +\mathbb{E}(\hat{ \sigma}(S_2) | S_1)W )]
\end{aligned}
\end{equation}
Here, the linearity of $b(\cdot)$ implies $\mathbb{E}(b(S_2) | S_1) = b(S_1)$. Note that the function $f(x) = \sqrt{x^2 + 1}$ is convex thanks to
\[
	%\partial_x f(x) = \frac{x}{\sqrt{x^2 + 1}},\quad
	f^{\prime\prime}(x) = \frac{2}{(x^2 + 1)\sqrt{x^2 + 1}} > 0.
\]
Similarly, $\sigma(\cdot)$ is convex. Using conditional Jensen's inequality again gives
\begin{equation} \label{POg4}
	\varpi(S_1):= \mathbb{E}(\hat{\sigma}(S_2) | S_1) \geqslant \hat{\sigma}( \mathbb{E}( S_2 | S_1) )= \hat{\sigma}(S_1). 
\end{equation}
Due to
\[
	S_1 +  \Delta b(S_1) + \mathbb{E}(\hat{\sigma}(S_2) | S_1)W \sim N(\mu, \varpi^2), \quad S_1 +  \Delta b(S_1) +\hat{\sigma}(S_1)W\sim N(\mu, \hat{\sigma}^2)
\]
with $\mu = \mathbb{E}(S_1 +  \Delta b(S_1))$,
by Theorem 3.4.7 in \cite{MS02}, (\ref{POg4}) implies that
\[ 
	\mathbb{E}[h( S_1 +  \Delta b(S_1) +\mathbb{E}( \hat{\sigma}(S_2) | S_1)W )] \geqslant \mathbb{E}[h( S_1 +  \Delta b(S_1) + \hat{\sigma}(S_1)W] = \mathbb{E}h(\mathcal{A}S_1).
\]
Combined with (\ref{POg5}) we have proved the convex-ordering property (\ref{POg2}).

By the Markov property of the Euler-Maruyama approximation we have
\[
	\mathcal{G}_{K}(t, x) = \mathbb{E}[g(\mathcal{A}^{K-1}x)].
\]
Let $\mathrm{z}$ be a Bernoulli random variable which takes the value $\mathrm{z}_1 \in \mathbb{R}$ with probability $\mathrm{p}\in (0, 1)$ and the value $z_1 \in \mathbb{R}$ with probability $1- \mathrm{p}$. Then $\mathbb{E}(Z) = \mathrm{p}z_1 +(1-\mathrm{p})z_2$. Then we have 
\[
	h(\mathbb{E}(Z)) = h(\mathrm{p}z_1 +(1-\mathrm{p})z_2) \leqslant \mathrm{p} h(z_1) + (1-\mathrm{p})h(z_2) = \mathbb{E}h(Z).
\]
Using the convex-ordering property (\ref{POg2}) of the operator $\mathcal{A}$ we obtain
\begin{equation}
	\mathcal{G}_{K}(t, \mathrm{p}z_1 +(1-\mathrm{p})z_2) = \mathcal{G}_{K}(t, \mathbb{E}(Z))= \mathbb{E}[g(\mathcal{A}^{K-1}\mathbb{E}(Z))] \leqslant  \mathbb{E}[g(\mathcal{A}^{K-1}Z)] =\mathcal{G}_{K}(t, Z) \label{POg3}
\end{equation}
due to $g$ is convex. Take expectation on both sides of (\ref{POg3}) gives
\[
	\mathcal{G}_{K}(t, \mathrm{p}z_1 +(1-\mathrm{p})z_2) \leqslant  \mathbb{E}[\mathcal{G}_{K}(t, Z)] = \mathrm{p}\mathcal{G}_{K}(t, z_1) + (1-\mathrm{p})\mathcal{G}_{K}(t, z_2), 
\]
which means $\mathcal{G}_{K}(t, \cdot)$ is convex. The approximation property (\ref{POg1}) implies the convexity of $\mathcal{G}(t, \cdot)$.
\end{proof}

\begin{proof}[Proof of Lemma \ref{lem:class_DL}]
%Let $g(z_i) = |z_i|^p$, We will show that $\mathcal{G}_i(t, Z^i_t)_{t\in[0, T]}$ is locally $L^2$-bounded. 
Since the solution to (\ref{PearsonIV}) is a polynomial process (see example 3.6 in \cite{CKT2012}), from Theorem 3.1 in \cite{FL2016} it implies
\[
	\mathcal{G}_i(t, Z^i_t) =  \mathbb{E}[g(\hat{Z}^i_T)|\hat{Z}^i_t = Z^i_t] =\exp\{(T-t)G\}\mathrm{P}(Z^i_t),
\]
where 
\begin{eqnarray*}
G=
\left(
\begin{array}{cccccc} 
0&\mathrm{g}_0&2\times 1\mathrm{g}_1&0&\cdots&0\\
0&\mathrm{g}_2&2\mathrm{g}_0 &3\times 2\mathrm{g}_1&0&\vdots\\
0&0&2\left(\mathrm{g}_2 +\mathrm{g}_3\right)&3\mathrm{g}_0&\ddots&0\\
0&0&0&3\left(\mathrm{g}_2+2\mathrm{g}_3\right)&\ddots&p(p-1)\mathrm{g}_1\\
\vdots&&&0&\ddots&p\mathrm{g}_0\\
0&&\cdots&&0&p\left(\mathrm{g}_2 +(p-1)\mathrm{g}_3\right)\\
\end{array}
\right)
\end{eqnarray*}
with
\begin{eqnarray*}
\mathrm{g}_0=\theta_i\mu_i, \quad \mathrm{g}_1= \mathrm{g}_3 = \theta_i a_i,\quad \mathrm{g}_2= -\theta_i,
\end{eqnarray*}
and $\mathrm{P}(Z^i_t)=(0, 1, Z^i_t, (Z^i_t)^2, \cdots, (Z^i_t)^p)^{\mathrm{T}}$. Then there is a constant $C_T$ that depends on $T$ such that
\[
	|\mathcal{G}_i(t, Z^i_t) |\leqslant C_T(1+|Z^i_t|^p).
\]
Let $\tau_n$ be a localizing sequence for $\mathcal{G}(t, y_t)$. Then we have
\[
	|\mathcal{G}_i(t\land\tau_n, Z^i_{t\land\tau_n}) |\leqslant C_T(1+|Z^i_{t\land\tau_n}|^p),
\]
which implies
\begin{equation} \label{comparison2prf1}
	|\mathcal{G}_i(t\land\tau_n, Z^i_{t\land\tau_n}) |^2 \leqslant C_T(1+|Z^i_{t\land\tau_n}|^{2p}).
\end{equation}
Taking $\mathcal{F}_0$-condition on both sides of (\ref{comparison2prf1}) gives
\begin{eqnarray*}
\begin{aligned}
	\mathbb{E}\left\{|\mathcal{G}_i(t\land\tau_n, Z^i_{t\land\tau_n}) |^2\right\}& \leqslant C_T\left(1+\mathbb{E}|Z^i_{t\land\tau_n}|^{2p} \right)\\
	&\leqslant C_T\left(1+\mathbb{E}\left[\sup_{n}|Z^i_{t\land\tau_n}|^{2p}\right] \right)\\
	&\leqslant C_T e^{CT}.
\end{aligned}
\end{eqnarray*}
Here, the last inequality holds based on Lemma 2.17 in \cite{CKT2012}. Thus, we complete the proof of this lemma.
\end{proof}

\subsection{Lower bound}
\label{subsec:lower_bound}
Let
\[	
	M(x) := \frac{x^{\mathrm{T}}A^{\mathrm{T}}A x }{|x|^2}, \quad x\in\mathbb{R}^{d} \setminus \{0\}
\]
denote the Rayleigh-quotient of $A^{\mathrm{T}}A$. From Chapter~1 in \citep{horn2012matrix} we have that the range of $M(x)$ is equal to the line segment $[\lambda_d^2, \lambda^2_1]$, i.e., 
\begin{equation}
\big\{M(x): x \in \mathbb{R}^d \setminus \{0\}\big\} = [\lambda_d^2, \lambda_1^2] 
\end{equation}
Evaluating the condition \eqref{ass_5_1}, we have 
\begin{eqnarray*}
\begin{aligned}
&\frac{(1 + |x|^2)\left[2x^{\mathrm{T}}F(x)\right]}{|x|^4}\\
&=\frac{(1 + |x|^2)\left\{-2\gamma x^{\mathrm{T}}\left[\frac{1}{n}A^{\mathrm{T}}(Ax - b) + \delta x\right] \right\}}{|x|^4}\\
&=\frac{(1 + |x|^2)\left[-2\gamma x^{\mathrm{T}}(\frac{1}{n}A^{\mathrm{T}}A +\delta I_d)x + 2\frac{\gamma}{n}x^{\mathrm{T}}A^{\mathrm{T}}b \right]}{|x|^4} \\
&= -\frac{2\gamma x^{\mathrm{T}}(\frac{1}{n}A^{\mathrm{T}}A +\delta I_d)x  }{|x|^4} - \frac{2\gamma x^{\mathrm{T}}(\frac{1}{n}A^{\mathrm{T}}A +\delta I_d)x}{|x|^2} + \frac{2\frac{\gamma}{n}(1 + |x|^2) x^{\mathrm{T}}A^{\mathrm{T}}b }{|x|^4}
\end{aligned}
\end{eqnarray*}
and
\begin{eqnarray*}
\begin{aligned}
&\frac{(1 + |x|^2)|G(x)|^2 - (2-\rho)|x^{\mathrm{T}}G(x)|^2}{|x|^4}\\
&=\frac{(1 + |x|^2)\left[\frac{\gamma^2}{n^2B}|\sqrt{\|Ax - b\|^2A^{\mathrm{T}}A}|^2 \right]- (2-\rho)\frac{\gamma^2}{n^2B}|x^{\mathrm{T}}\sqrt{\|Ax - b\|^2A^{\mathrm{T}}A}|^2}{|x|^4}\\
&=\frac{\frac{\gamma^2}{n^2}(1 + |x|^2)\|Ax - b\|^2|\sqrt{A^{\mathrm{T}}A}|^2 - (2-\rho)\frac{\gamma^2}{n^2}\|Ax - b\|^2|x^{\mathrm{T}}\sqrt{A^{\mathrm{T}}A}|^2}{|x|^4}\\
&=\frac{\frac{\gamma^2}{n^2B}\|Ax - b\|^2|\sqrt{A^{\mathrm{T}}A}|^2 }{|x|^4} +\frac{\frac{\gamma^2}{n^2B}\|Ax - b\|^2|\sqrt{A^{\mathrm{T}}A}|^2}{|x|^2} \\
&\qquad- \frac{(2-\rho)\frac{\gamma^2}{n^2B}\|Ax - b\|^2}{|x|^2} \frac{x^{\mathrm{T}}A^{\mathrm{T}}Ax}{|x|^2}.
\end{aligned}
\end{eqnarray*}
With $|\sqrt{A^{\mathrm{T}}A}|^2 = \mathrm{tr}(A^{\mathrm{T}}A)$ and the positive constant $\rho$ given below,
we obtain
\begin{equation}\label{5_1}
\begin{aligned}
	&\limsup_{|x |\rightarrow\infty}\frac{(1 + |x|^2)\left[2x^{\mathrm{T}}F(x)+ |G(x)|^2\right]- (2-\rho)|x^{\mathrm{T}}G(x)|^2}{|x|^4} \\
	&=\limsup_{|x |\rightarrow\infty}\Big[ - \frac{2\gamma x^{\mathrm{T}}(\frac{1}{n}A^{\mathrm{T}}A +\delta I_d)x}{|x|^2} + \frac{\frac{\gamma^2}{n^2B}\|Ax - b\|^2|\sqrt{A^{\mathrm{T}}A}|^2}{|x|^2} + \\
	&\quad + - \frac{(2-\rho)\frac{\gamma^2}{n^2B}\|Ax - b\|^2}{|x|^2} \frac{x^{\mathrm{T}}A^{\mathrm{T}}Ax}{|x|^2} \Big]\\
	&=-\frac{\gamma^2}{n^2B} \liminf_{|x| \to \infty}\left[\frac{2nB(M(x) + n\delta)}{\gamma} - \mathrm{tr}(A^{\mathrm{T}}A) M(x) + (2-\rho)M(x)^2\right] = \\
	&=-\frac{\gamma^2}{n^2B} \inf_{m \in [\lambda_d^2, \lambda_1^2]} q(m,\rho),
\end{aligned}
\end{equation}
where
\begin{equation}
q(m,\rho) = \frac{2nB(m + n\delta)}{\gamma} - \mathrm{tr}(A^{\mathrm{T}}A) m + (2-\rho)m^2.
\end{equation}
Set
\[
	\vartheta :=  2 + \frac{2nB(\lambda_1^2+ n\delta)}{\gamma \lambda_1^4} - \frac{\sum_{i=1}^{d}\lambda_i^2}{\lambda_1^2}.
\]
Note that due to the assumption $\gamma < \bar \gamma$ we have $\vartheta > 2$. We claim that 
\begin{equation}\label{eq:inf1}
\inf_{m \in [\lambda_d^2, \lambda_1^2]} q(m,\rho) > q(\lambda_1^2, \theta) = 0 
\end{equation}
for all $\rho \in [2,\vartheta)$. First, note that $m \mapsto q(m,\rho)$ is concave for any  $\rho \in [2,\vartheta)$, such that its minimum must be attained at one of the boundary values $m \in \{\lambda_d^2, \lambda_1^2\}$. Second, note that $\rho \mapsto q(m,\rho)$ is strictly decreasing for any $m \in (0, \infty)$, such that for \eqref{eq:inf1} it is sufficient to show
\begin{equation}\label{eq:inf2}
q(\lambda_d^2, \theta) \ge q(\lambda_1^2, \theta) = 0.
\end{equation} 
Using the assumption $\gamma < \bar \gamma$ we obtain
\begin{align*}
q(\lambda_d^2, \theta) &= \frac{2nB}{\gamma}(\lambda_d^2 + n\delta) - \mathrm{tr}(A^{\mathrm{T}}A)  \lambda_d^2 +  \frac{2nB}{\gamma}(\lambda_1^2 + n\delta) \frac{\lambda_d^4}{\lambda_1^4} - \mathrm{tr}(A^{\mathrm{T}}A)  \frac{\lambda_d^4}{\lambda_1^2} \ge \\
&\ge \mathrm{tr}(A^{\mathrm{T}}A) \left(\frac{(\lambda_d^2 + n\delta)}{(\lambda_1^2 + n\delta)}\lambda_1^2 - \lambda_d^2\right).
\end{align*}
For $\delta = 0$ the right hand side vanishes and \eqref{eq:inf2} is shown. Differentiation shows that the right hand side is increasing in $\delta$, such that \eqref{eq:inf2} holds for all $\delta \ge 0$. 
Altogether, we have shown that the right hand side of \eqref{5_1} is strictly negative. Thus, the SDE (\ref{eq:sde2}) satisfies the Assumption 5.1 in \cite{LMY2019}.
Based on Theorem 5.2 in \cite{LMY2019}, the solution $X_t$ of the SDE (\ref{eq:sde2}) satisfies
\begin{eqnarray*}
	\sup_{0\leqslant t < \infty}\mathbb{E}|X_t|^\rho \leqslant C
\end{eqnarray*}
for all $\rho\in [2, \vartheta)$. Therefore, the lower bound, denoted by $\eta_*$, for the asymptotic tail-index of $X_t$ is
\[
\eta_* = \vartheta = 1 + \frac{2nB(\lambda_1^2 + n\delta)}{\gamma \lambda_1^4} - \frac{ \sum_{i=2}^{d}\lambda_i^2}{\lambda_1^2}.
\]

%%%%%%%%%%%%%%%%%%%%%%%%%%%%%%%%%%%%%%%%%%%%%%%%%%%%%%%%%%%%

\end{document}